\newtheorem{theorem}{Theorem}
\newtheorem{definition}{Definition}
\newtheorem{assumption}{Assumption}
\newtheorem{proposition}{Proposition}
\newcommand{\calG}{\mathcal{G}}
\newcommand{\calN}{\mathcal{N}}
\newcommand{\calM}{\mathcal{M}}
\newcommand{\calP}{\mathcal{P}}
\newcommand{\calX}{\mathcal{X}}
\newcommand{\E}{\mathbb{E}}
\newcommand{\bmS}{\bm{S}}
\newcommand{\bmQ}{\bm{Q}}
\newcommand{\bmX}{\bm{X}}
\DeclareMathOperator{\KL}{KL}
\DeclareMathOperator{\doop}{do}
\title{Spatio-Temporal Hierarchical Causal Models}
\author{
    %Authors
    % All authors must be in the same font size and format.
    % Written by AAAI Press Staff\textsuperscript{\rm 1}\thanks{With help from the AAAI Publications Committee.}\\
    % AAAI Style Contributions by Pater Patel Schneider,
    % Sunil Issar,\\
    Xintong Li\textsuperscript{\rm 1},
    Haoran Zhang\textsuperscript{\rm 1},
    % Xiao Zhou\textsuperscript{\rm 2*}
    Xiao Zhou\textsuperscript{\rm 2,3,4}\thanks{Corresponding author.}
    % Francisco Cruz\equalcontrib,
    % Marc Pujol-Gonzalez\equalcontrib
}
\title{My Publication Title --- Single Author}
\author {
    Author Name
}
\title{My Publication Title --- Multiple Authors}
\author {
    % Authors
    First Author Name\textsuperscript{\rm 1,\rm 2},
    Second Author Name\textsuperscript{\rm 2},
    Third Author Name\textsuperscript{\rm 1}
}
\renewcommand\appendix{
  \par
  \setcounter{section}{0}%
  \setcounter{subsection}{0}%
  \gdef\thesection{\Alph{section}}%
  \gdef\thesubsection{\thesection.\arabic{subsection}}%
  \@addtoreset{subsection}{section}%
}
\begin{document}

\maketitle

\begin{abstract}
The abundance of fine-grained spatio-temporal data, such as traffic sensor networks, offers vast opportunities for scientific discovery. However, inferring causal relationships from such observational data remains challenging, particularly due to unobserved confounders that are specific to units (e.g., geographical locations) yet influence outcomes over time. Most existing methods for spatio-temporal causal inference assume that all confounders are observed, an assumption that is often violated in practice.
In this paper, we introduce \textit{Spatio-Temporal Hierarchical Causal Models} (\textbf{ST-HCMs}), a novel graphical framework that extends hierarchical causal modeling to the spatio-temporal domain. At the core of our approach is the \textit{Spatio-Temporal Collapse Theorem}, which shows that a complex ST-HCM converges to a simpler flat causal model as the amount of subunit data increases. This theoretical result enables a general procedure for \textit{causal identification}, allowing ST-HCMs to recover causal effects even in the presence of unobserved, time-invariant unit-level confounders, a scenario where standard non-hierarchical models fail.
We validate the effectiveness of our framework on both synthetic and real-world datasets, demonstrating its potential for robust causal inference in complex dynamic systems.
\end{abstract}

% Uncomment the following to link to your code, datasets, an extended version or similar.
% You must keep this block between (not within) the abstract and the main body of the paper.
\begin{links}
    \link{Code}{https://github.com/CAMELLIAxt/ST-HCMs}
    % \link{Datasets}{https://aaai.org/example/datasets}
    % \link{Extended version}{https://aaai.org/example/extended-version}
\end{links}

\section{Introduction}
Fine-grained spatio-temporal data are becoming ubiquitous, generated by sources ranging from satellite imagery in environmental science to sensor networks in urban planning ~\citep{wang_zhou_mascolo_noulas_xie_liu_2018, xu2024cgapurbanregionrepresentation, ali2024causalityearthscience,dong2024causallyawarespatiotemporalmultigraphconvolution,Li_Li_Zhou_2025}. Such data inherently possess a hierarchical structure, offering unprecedented opportunities to uncover causal mechanisms in complex dynamic systems. 
However, reliable causal inference from observational data remains challenging ~\citep{hernan2010causal,reich2021review}, obscuring mechanistic insights and impeding accurate policy intervention predictions in complex spatio-temporal systems like urban operations ~\cite{zhang2023spatiotemporal,DONG2025104244,xie-etal-2025-coalign}.

Consider the urban traffic system in Figure~\ref{fig:intro}, where a city is divided into several regions (units), and within each region, numerous sensors monitor traffic flow (outcome) on specific road segments (subunits). A traffic accident (treatment) can disrupt the flow, but a region's unique characteristics, such as its geographical layout and road network structure, serve as an \textbf{unobserved confounder}. This confounder affects both the likelihood of accidents and typical traffic patterns, making it difficult to disentangle the true causal effect of an accident from the region's inherent characteristics ~\citep{weinstein2024hierarchicalcausalmodels,10.1093/ectj/utaf011}. The problem is further compounded by temporal dynamics and spatial spillovers from neighboring regions, which introduces additional layers of complexity~\citep{ROBINS19861393,10.1214/16-AOAS1005,reich2021review}.

\begin{figure}[t]
    \centering
    \includegraphics[width=0.46\textwidth]{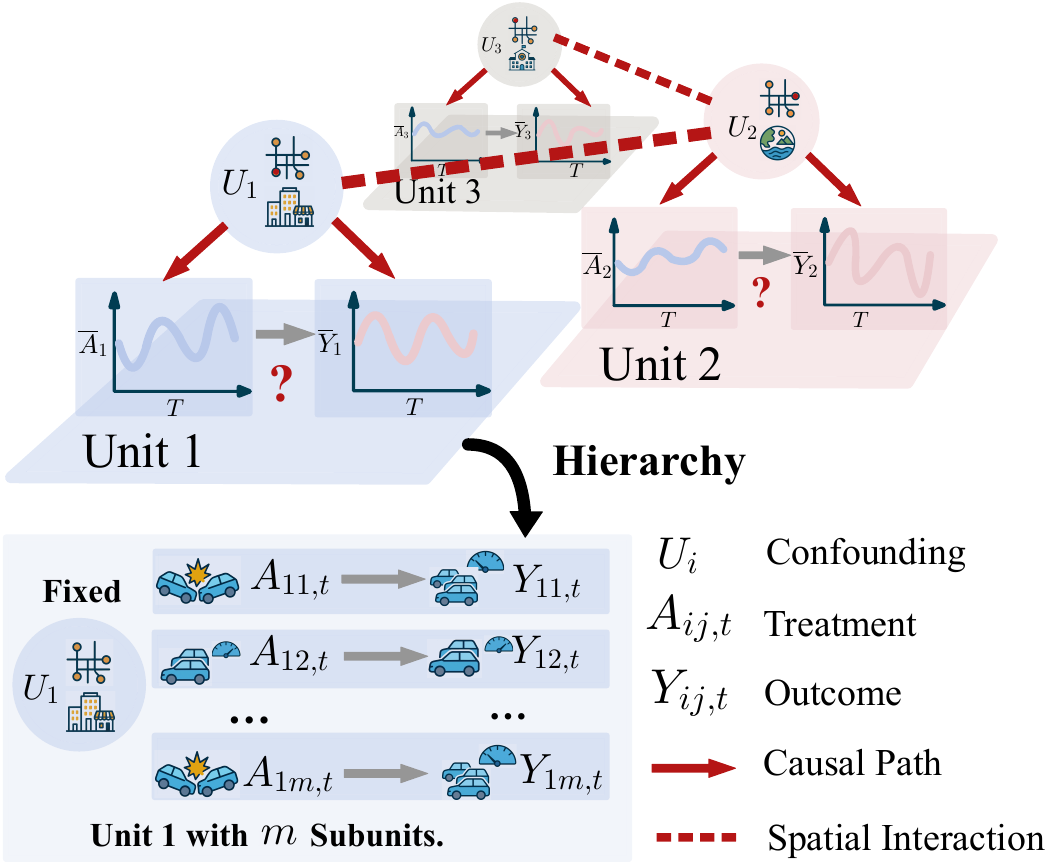}
    \caption{An example of hierarchical causality in traffic.} 
    \label{fig:intro}
\end{figure}

This hierarchical structure offers a powerful, yet often underutilized, path forward. Standard analytical methods often bypass this structure, instead relying on the strong ``no unmeasured confounding" assumption~\citep{hernan2006estimating, ali2024estimating,oprescu2025gstunetspatiotemporalcausalinference}, which is untenable for complex systems like the one in Figure~\ref{fig:intro}. While instrumental variables (IV) provide a theoretical alternative~\citep{martinussen2017instrumental,cheng2024instrumental}, finding a valid instrument that is both relevant and exogenous is notoriously difficult in practice. Methods like fixed-effect (FE) models do leverage within-unit variation~\citep{angrist2009mostly,pmlr-v115-jensen20a}, but they typically impose restrictive parametric assumptions, such as linearity, limiting their applicability.

In contrast, our approach builds on the key insight illustrated in Figure~\ref{fig:intro}: we treat each unit as a local natural experiment, enabling non-parametric estimation of region-specific causal effects by analyzing the relationship between treatments and subunit-level outcomes. This paper extends this powerful principle to spatio-temporal settings, where such effects are confounded not only by static heterogeneity but also by dynamic evolution and spatial interactions. 

While the idea of leveraging hierarchical structure for causal inference has been explored in static settings ~\citep{weinstein2024hierarchicalcausalmodels}, its extension to the spatio-temporal domain presents non-trivial. The introduction of time and space creates two fundamental challenges: (1)~\textbf{dynamic confounding}, where the history of the system acts as a time-varying confounder for future treatments and outcomes, and (2)~\textbf{spatial confounding}, where the states of neighboring units interfere with each other's causal processes. To address these challenges, we introduce \textbf{Spatio-Temporal Hierarchical Causal Models (ST-HCMs)}, a novel graphical framework that explicitly models these dependencies. The core of our solution is a \textbf{Spatio-Temporal Collapse Theorem}, which provides a theoretical guarantee that the complex, dynamic ST-HCM mathematically converges to a simpler, equivalent representation. This theorem is the key that enables causal identification, allowing us to disentangle effects from the web of spatio-temporal confounding.

% Our main contributions are:
The contributions of this paper are as follows:
\begin{itemize}
    \item  \textbf{Conceptually,} we propose the first graphical framework (ST-HCMs) for causal modeling of nested spatio-temporal data, providing a formal formulation to reason about causality within spatio-temporal systems.
    \item \textbf{Theoretically,} we prove a Spatio-Temporal Collapse Theorem (Theorem~\ref{thm:collapse_st}) that guarantees a complex ST-HCM converges to a simpler, equivalent model, which in turn enables causal identification (Theorem~\ref{thm:id_st_hcm_adj},\ref{thm:id_st_hcm_iv}) despite unobserved confounders.
    \item \textbf{Experimentally,} we validate our framework through extensive simulations, showing that it robustly recovers causal effects despite unobserved confounders and spatial spillovers. On a real-world dataset, we further demonstrate that it substantially reduces the biases inherent to traditional non-hierarchical models.
\end{itemize}    

\section{Preliminaries}
\label{sec:preliminaries}

\paragraph{Notation}
Consider a system composed of $N$ units, indexed by $i \in \{1, \dots, N\}$, observed over $T$ discrete time steps, indexed by $t \in \{1, \dots, T\}$. Each unit $i$ contains $m_i$ subunits, indexed by $j \in \{1, \dots, m_i\}$. For simplicity, we assume $m_i = m$ for all $i$.
Let $V$ be a finite index set for endogenous variable types. This set is partitioned into subunit-level variables $S$ and unit-level variables $U = V \setminus S$.

We define the key variables as follows:
\begin{itemize}
    \item $X_{ij,t}^v$: The value of a subunit-level variable $v \in S$ for subunit $j$ in unit $i$ at time $t$.
    \item $X_{i,t}^w$: The value of a unit-level variable $w \in U$ for unit $i$ at time $t$.
    \item $\bmX_{i,<t}$: The full history of all variables associated with unit $i$ prior to time $t$.
    \item $\calN(i)$: The set of spatial neighbors of unit $i$.
    \item $\calX$: A generic state space. $\calP(\calX)$ denotes the space of all probability measures on $\calX$.
    \item $U_i$: A time-invariant, unobserved random variable representing the static latent properties of unit $i$.
    \item $\gamma_{i,t}^v, \epsilon_{ij,t}^v$: Time-varying exogenous noise variables at the unit and subunit levels, respectively.
\end{itemize}

\paragraph{Causal Graphical Models and HCMs}
A Causal Graphical Model (CGM) represents causal relationships as a Directed Acyclic Graph (DAG), where nodes are variables and directed edges denote causal effects. Interventions are denoted by the $\doop$-operator \citep{10.1214/09-SS057}.

\begin{figure}[t]
    \centering
    \includegraphics[width=0.45\textwidth]{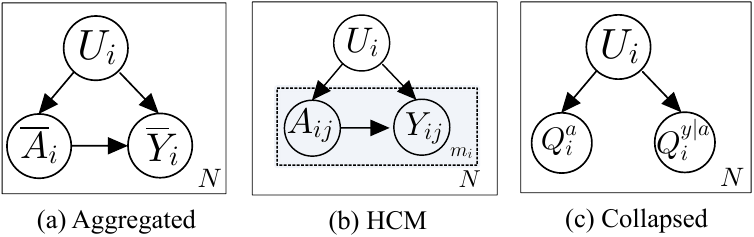}
    \caption{Hierarchy and Collapsing in HCMs.} 
    \label{fig:HCM_org_plot}
\end{figure}

The static Hierarchical Causal Model (HCM) framework \cite{weinstein2024hierarchicalcausalmodels} is designed to overcome confounding in nested data. As illustrated in Figure~\ref{fig:HCM_org_plot}(a), in a standard aggregated view, an unobserved confounder $U_i$ biases the estimated effect of treatment $\bar{A}_i$ on outcome $\bar{Y}_i$. The key insight of HCM is to leverage the fine-grained subunit data within each unit, as depicted in Figure~\ref{fig:HCM_org_plot}(b). 
This hierarchical structure enables a powerful identification strategy known as collapsing. The procedure transforms the HCM into an equivalent, flat causal graph (Figure~\ref{fig:HCM_org_plot}(c)), where the subunit-level causal mechanisms are succinctly captured by new random variables, termed Q-variables (e.g., $Q_i^a, Q_i^{y|a}$). However, the static HCM is limited to cross-sectional settings and does not account for temporal dynamics or spatial dependencies, which are essential in spatio-temporal domains. Our work extends this collapsing principle to the dynamic spatio-temporal domain.

\section{Spatio-Temporal Hierarchical Causal Models}
\label{sec:st-hcm}

In this section, we introduce our Spatio-Temporal Hierarchical Causal Model (ST-HCM) framework. To build intuition, we first present a temporal extension of the static HCM, the T-HCM. We then incorporate spatial dependencies to formulate the ST-HCM and present the convergence theorem.

\subsection{ A Temporal Extension: T-HCMs}
\label{subsec:t-hcm}

\begin{definition}[Temporal Hierarchical Structural Causal Model (T-HSCM)]
A T-HSCM, $\calM_{t-hscm}$, is defined by: (1) a summary causal graph $\calG_T$ whose unrolling over time yields a DAG; (2) a set of static exogenous variables $U_i \sim P(U)$; (3) sets of time-varying exogenous noises $\{\gamma_{i,t}^v\}, \{\epsilon_{ij,t}^v\}$; and (4) a set of deterministic mechanism functions $\{f^v\}_{v \in V}$. The value of each endogenous variable is generated as follows:
\begin{itemize}
    \item For a subunit-level variable $v \in S$:
    \begin{align*}
         x_{ij,t}^v = f^v(U_i, \text{pa}_U(x_{i,t}^v), \text{pa}_S(x_{ij,t}^v), \bmX_{i,<t}, \gamma_{i,t}^v, \epsilon_{ij,t}^v) .
    \end{align*}
    \item For a unit-level variable $w \in U$:
    \begin{align*}
        x_{i,t}^w &= f^w(U_i, \text{pa}_U(x_{i,t}^w), \\
        &\{\{x_{ij,t}^{v'}\}_{v' \in \text{pa}_S(w,t)}\}_{j=1}^m, \bmX_{i,<t}, \gamma_{i,t}^w),
    \end{align*}
    where the mechanism $f^w$ is permutation-invariant with respect to the subunit index $j$.
\end{itemize}
\end{definition}

By integrating out the exogenous noises from the structural model, we obtain the T-HCGM. This graphical model is described by stochastic mechanisms, centered on the dynamic evolution of Q-variables, which represent the conditional distributions governing subunit-level phenomena.

\begin{definition}[Temporal Hierarchical Causal Graphical Model (T-HCGM)]
A T-HCGM, $\calM_{t-hcgm}$, corresponding to a T-HSCM, is characterized by a two-stage generative process at each time step $t$:
\begin{enumerate}
    \item \textbf{Q-variable Evolution:} A vector of Q-variables $\bmQ_{i,t}$ is generated for each unit $i$, conditional on its macro-history: $\bmQ_{i,t} \sim p(\bm{q}_t | U_i, \bm{S}_{i,<t})$
    where $\bm{S}_{i,<t} = (X_{i,<t}^U, \bmQ_{i,<t})$ is the macro-history.
    \item \textbf{Variable Generation:} Endogenous variables are drawn conditional on the current Q-variables and their history.
\end{enumerate}
\end{definition}

To prove the convergence of this model to a simpler, collapsed representation, we require the following assumptions.

\begin{assumption}[Mechanism Convergence]
\label{assump:mech_conv_t}
For any $w \in U$, its conditional distribution's generating mechanism is continuous with respect to the empirical measure of its subunit-level parent histories. Let $\calP_{i,<t}$ be the true probability measure on the subunit history space, determined by the macro-history $\text{Hist}_{i,<t}^{col} = (U_i, \bm{S}_{i,<t})$. Let $\hat{\calP}_{i,m,<t}$ be the empirical measure from $m$ subunits. We assume:
% \begin{align}
%     \lim_{m\to\infty} \E_{\hat{\calP}_{i,m,<t}} [ \KL \left( p(X_{i,t}^w | \dots, \calP_{i,<t}) \\
% || \ p(X_{i,t}^w | \dots, \hat{\calP}_{i,m,<t}) \right) ] = 0
% \end{align}
% \begin{align}
%     \lim_{m\to\infty} \E_{\hat{\calP}_{i,m,<t}} [ 
%         \KL \left( p(X_{i,t}^w | \dots, \calP_{i,<t}) \right. \notag \\
%         \left. \middle\| \ p(X_{i,t}^w | \dots, \hat{\calP}_{i,m,<t}) \right) ] = 0
% \end{align}
\begin{align} 
\lim_{m\to\infty} \E_{\hat{\calP}_{i,m,<t}} [ \KL( p(X_{i,t}^w | \bm{S}_{i,<t}, \calP_{i,<t}) \notag \\
|| \ p(X_{i,t}^w | \bm{S}_{i,<t}, \hat{\calP}_{i,m,<t}) ) ] = 0 .
\end{align}
\end{assumption}
This assumption formalizes the idea that with enough subunit data, the uncertainty from sampling subunit histories becomes negligible for determining unit-level outcomes .

\begin{assumption}[Regularity of State Spaces]
\label{assump:regularity_t}
The state spaces for all variables are Polish spaces. Furthermore, the class of functions defining subunit histories forms a Glivenko-Cantelli class, ensuring the uniform convergence of empirical measures.
\end{assumption}
This is a standard technical condition for ensuring the regularity of the underlying probability spaces ~\cite{hernan2006estimating, ali2024estimating}.

\begin{theorem}[Convergence of T-HCMs]
\label{thm:collapse_t}
Under Assumptions \ref{assump:mech_conv_t} and \ref{assump:regularity_t}, as the number of subunits $m \to \infty$, the joint distribution of the macro-state history from a T-HCGM, marginalized over subunit variables ($P_{m,T}^{marg}$), converges in Kullback-Leibler divergence to the distribution of its corresponding Dynamic Collapsed Model ($P_{col,T}$). For any finite time $T$:
\begin{align}
     \lim_{m\to\infty} \KL(P_{col,T} \ || \ P_{m,T}^{marg}) = 0 .
\end{align}
\end{theorem}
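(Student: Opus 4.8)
The plan is to exploit the sequential structure of the T-HCGM and reduce the joint convergence statement to a finite sum of per-time-step conditional convergence statements, each controlled by Assumption~\ref{assump:mech_conv_t}. The starting point is the chain rule for KL divergence: since both $P_{col,T}$ and $P_{m,T}^{marg}$ are generated by the same two-stage process (Q-variable evolution followed by variable generation) unrolled over the finite horizon $t=1,\dots,T$, their joint laws factor identically over time, and I may write
\begin{align*}
\KL(P_{col,T} \,\|\, P_{m,T}^{marg}) = \sum_{t=1}^{T} \E_{\bm{S}_{<t}\sim P_{col}}\big[\,\KL\big(P_{col}(\cdot \mid \bm{S}_{<t}) \,\|\, P_{m}^{marg}(\cdot \mid \bm{S}_{<t})\big)\,\big].
\end{align*}
The crucial feature is that the outer expectation is taken with respect to $P_{col}$, which does not depend on $m$; this fixes the conditioning histories and sidesteps the difficulty of the history distribution itself drifting with the sample size.

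Second, I would bound each summand. Within a single step the Q-variable evolution $\bmQ_{i,t}\sim p(\bm{q}_t\mid U_i,\bm{S}_{i,<t})$ is, by construction, a population-level kernel common to both models, so it contributes nothing to the per-step divergence; the only discrepancy enters through the generation of the unit-level variables $X_{i,t}^w$, $w\in U$. Because the mechanism $f^w$ is permutation-invariant in the subunit index, it depends on its subunit parents only through their empirical measure $\hat{\calP}_{i,m,<t}$, whereas the collapsed counterpart uses the true measure $\calP_{i,<t}$. Assumption~\ref{assump:mech_conv_t} states precisely that the expected KL between these two conditionals vanishes as $m\to\infty$, so each per-step term tends to zero.

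Third, I would upgrade this pointwise-in-history convergence to convergence of the expectation over histories and then sum. Here Assumption~\ref{assump:regularity_t} does the heavy lifting: the Glivenko-Cantelli property guarantees uniform convergence $\hat{\calP}_{i,m,<t}\to\calP_{i,<t}$, and the Polish-space regularity supplies a dominating envelope, so a dominated-convergence argument lets me pass the limit through the outer expectation. Since $T$ is finite, I may then interchange the limit with the finite sum and conclude that the total KL divergence vanishes.

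The step I expect to be the main obstacle is the third. Although the chosen direction of the KL divergence conveniently pins the conditioning histories to the $m$-independent $P_{col}$, one must still verify that $P_m^{marg}(\cdot \mid \bm{S}_{<t})$ is well-defined and absolutely continuous with respect to $P_{col}(\cdot \mid \bm{S}_{<t})$ along $P_{col}$-typical histories (so that each summand is finite), and that the convergence in Assumption~\ref{assump:mech_conv_t} holds uniformly enough over the reachable history set to justify domination. Establishing this integrable domination, rather than the per-step limit itself, is the technical crux.
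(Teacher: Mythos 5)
Your proposal is correct and follows essentially the same route as the paper's proof: the paper organizes the chain-rule decomposition as an induction on $t$ rather than an explicit sum, but the key steps are identical—the Q-variable transition kernel contributes zero by construction of the collapsed model, and the unit-level term vanishes via Assumption~\ref{assump:mech_conv_t} together with the Glivenko-Cantelli/continuous-mapping and dominated-convergence arguments from Assumption~\ref{assump:regularity_t}. Your closing remark about absolute continuity and integrable domination is a fair observation; the paper handles this only by asserting the integrand is ``non-negative and appropriately bounded'' before invoking dominated convergence.
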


\begin{proof}[Proof Sketch]
The proof proceeds by mathematical induction on the time step $t$, using the chain rule: $ \KL(P_t || Q_t) = \KL(P_{t-1} || Q_{t-1}) + \E_{P_{t-1}}[\KL(P(S_t|H_{t-1}) || Q(S_t|H_{t-1}))] $. The inductive step shows that the expected KL-divergence of the one-step transition kernels converges to zero. We decompose this inner KL term into two parts: one for the Q-variable evolution and one for the unit-level variable generation. The KL-divergence for the Q-variable part is identically zero by definition, as the collapsed model preserves this mechanism exactly. The KL-divergence for the unit-level variable part converges to zero due to a continuous mapping argument: Assumption \ref{assump:regularity_t} ensures the convergence of the input, and Assumption \ref{assump:mech_conv_t} ensures the continuity of the function. Thus, the output distributions converge. The full proof is provided in Appendix~\ref{app:proof_t_hcm_collapse}.
\end{proof}

\subsection{The Framework: ST-HCMs}
\label{subsec:st-hcm}

We now incorporate spatial dependencies to construct the full Spatio-Temporal Hierarchical Causal Model. This introduces new challenges, primarily the symmetric, contemporaneous interactions between units, which can create cycles in the causal graph. We address this by imposing a causal ordering on spatial interactions.

\begin{definition}[Spatio-Temporal Hierarchical Causal Graphical Model (ST-HCGM)]
An ST-HCGM, $\calM_{st-hcgm}$, extends the T-HCGM by allowing the generating mechanisms to depend on the states of spatial neighbors $\calN(i)$.
\begin{itemize}
    \item The Q-variable evolution for unit $i$ now depends on its own macro-history and that of its neighbors:
    \begin{align}
         \bmQ_{i,t} \sim p(\bm{q}_t | U_i, \bm{S}_{i,<t}, \{\bm{S}_{k,<t}\}_{k \in \calN(i)}) .
    \end{align}
    \item The generation of unit-level variables $X_{i,t}^w$ similarly depends on the history of unit $i$ and its neighbors $\calN(i)$.
\end{itemize}
\end{definition}

To ensure the model is well-defined, we introduce the following assumptions regarding its spatial structure.

\begin{assumption}[Spatial Markov Property (S1)]
\label{assump:spatial_markov}
The state of any unit $i$ at time $t$ is conditionally independent of all non-neighboring units, given the history of unit $i$ and its designated neighbors $\calN(i)$. 
\end{assumption}
% This enforces local spatial dependence ~\citep{10.1093/ectj/utaf011,calhoun2024estimating}.

\begin{assumption}[Spatial Causal Ordering (S2)]
\label{assump:spatial_order}
There exists a known, global ordering $<$ on the set of all units $\{1, \dots, N\}$. For any two neighbors $i, j \in \calN(i)$, contemporaneous causal influence is unidirectional: unit $j$ can influence unit $i$ at time $t$ only if $j < i$. 
\end{assumption}
% This assumption is crucial as it eliminates contemporaneous cycles, ensuring the unrolled spatio-temporal graph is a DAG ~\citep{10.1093/ectj/utaf011}.
\begin{assumption}[Spatial Homogeneity (S3)]
\label{assump:spatial_homogeneity}
All units share the same set of mechanism functions $\{f^v\}$. The functional form of spatial influence is invariant across space, depending only on the relative relationship between units, not their absolute identities .
\end{assumption}
% ~\citep{elhorst2014spatial}

With these assumptions in place, we can state the main convergence theorem for our full framework.

\begin{theorem}[Convergence of ST-HCMs]
\label{thm:collapse_st}
Under Assumptions \ref{assump:mech_conv_t}, \ref{assump:regularity_t}, and \ref{assump:spatial_markov}-\ref{assump:spatial_homogeneity}, as the number of subunits $m \to \infty$, the joint distribution of the macro-state history from an ST-HCGM, marginalized over subunit variables ($P_{m,T,N}^{marg}$), converges in Kullback-Leibler divergence to the distribution of its corresponding Spatio-Temporal Dynamic Collapsed Model ($P_{col,T,N}$). For any finite $N$ and $T$:
\begin{align}
    \lim_{m\to\infty} \KL(P_{col,T,N} \ || \ P_{m,T,N}^{marg}) = 0 .
\end{align}
\end{theorem}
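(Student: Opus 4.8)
The plan is to replicate the inductive architecture of the proof of Theorem~\ref{thm:collapse_t}, but to enlarge the one-step chain-rule decomposition so that it also accounts for contemporaneous spatial interactions. The central observation is that Assumption~\ref{assump:spatial_order} (S2) renders the graph of within-slice spatial influences acyclic: fixing the global ordering $<$, at any time $t$ the contemporaneous edges point from $j$ to $i$ only when $j < i$, so the joint law of the time-$t$ macro-states $\{\bm{S}_{i,t}\}_{i=1}^N$ factorizes as an ordered product $\prod_{i=1}^N p\bigl(\bm{S}_{i,t} \mid \text{hist},\, \{\bm{S}_{k,t}\}_{k<i,\, k \in \calN(i)}\bigr)$. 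This lets me apply the KL chain rule twice: first peeling off time steps as in the temporal proof, and then, inside each time slice, peeling off units one at a time in the order prescribed by $<$.

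Concretely, I would carry out a nested induction. The outer induction over $t$ is identical to Theorem~\ref{thm:collapse_t}: the chain rule gives $\KL(P_{col,t} \,\|\, P_{m,t}^{marg}) = \KL(P_{col,t-1} \,\|\, P_{m,t-1}^{marg}) + \E[\KL(\text{one-step kernel})]$, and it suffices to drive the expected one-step term to zero. The new content is that the one-step term is itself a finite sum, over units and over the ordered spatial factorization, of atomic conditional KL divergences. For each atomic term I reproduce the two-part decomposition of the temporal proof: the Q-variable evolution $\bmQ_{i,t} \sim p(\bm{q}_t \mid U_i, \bm{S}_{i,<t}, \{\bm{S}_{k,<t}\}_{k \in \calN(i)})$ is preserved verbatim by the collapsed model, so its KL contribution is identically zero; the unit-level generation of $X_{i,t}^w$ is where the limit $m \to \infty$ enters. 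Here the neighbors' macro-states enter the mechanism only as additional finite-dimensional conditioning arguments, so the continuous-mapping argument---Assumption~\ref{assump:regularity_t} supplying convergence of the empirical subunit measure and Assumption~\ref{assump:mech_conv_t} supplying continuity of $f^w$ in that measure---carries through essentially unchanged, now with the neighbor states held fixed. Assumption~\ref{assump:spatial_homogeneity} (S3) guarantees that all units share one mechanism and hence one modulus of continuity, so a single convergence statement suffices for all $N$ units.

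The main obstacle, and the step deserving the most care, is controlling the interaction between the $m \to \infty$ limit and the spatial conditioning. In the temporal proof the subunit empirical measure $\hat{\calP}_{i,m,<t}$ is governed by the macro-history of a single unit; once neighbors are introduced, the relevant empirical measure for unit $i$ must converge uniformly over the random realized neighbor macro-states that appear as conditioning inputs, rather than pointwise for a fixed history. I would discharge this by invoking the Glivenko-Cantelli hypothesis of Assumption~\ref{assump:regularity_t}, which already delivers \emph{uniform} convergence of empirical measures over the relevant function class, and by noting that because $N$ is finite the augmented macro-history $\bigl(U_i, \bm{S}_{i,<t}, \{\bm{S}_{k,<t}\}_{k \in \calN(i)}\bigr)$ still lives in a finite product of Polish spaces (Assumption~\ref{assump:spatial_markov} (S1) ensuring no further dependence enters). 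Consequently Assumption~\ref{assump:mech_conv_t}, read with this enlarged but still finite-dimensional conditioning, yields the required convergence for each atomic term, and summing the finitely many zero-limit terms over units within each slice and over the finitely many slices $t \le T$ closes both inductions, giving $\lim_{m \to \infty} \KL(P_{col,T,N} \,\|\, P_{m,T,N}^{marg}) = 0$.
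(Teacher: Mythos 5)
Your proposal is correct, but it takes a structurally different route from the paper. The paper does not re-run the induction at all: it constructs a single ``Super-Unit'' model $\calM_{super}$ whose state at time $t$ is the concatenation $(\bm{S}_{1,t},\dots,\bm{S}_{N,t})$, uses Assumption~\ref{assump:spatial_order} exactly as you do---to certify that within-slice dependencies are acyclic---but then only to argue that $\calM_{super}$ is a valid T-HCGM satisfying the premises of Theorem~\ref{thm:collapse_t} (finite products of Polish spaces are Polish, the aggregated empirical measure over all $N\times m$ subunits remains Glivenko--Cantelli), after which Theorem~\ref{thm:collapse_t} is invoked as a black box and the result transfers by noting that the Super-Unit's marginalized and collapsed distributions coincide with $P_{m,T,N}^{marg}$ and $P_{col,T,N}$. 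Your nested induction---outer over time, inner over units in the order given by $<$, with the KL chain rule peeling off one ordered within-slice factor at a time---is essentially what you obtain if you inline the paper's reduction and unpack the proof of Theorem~\ref{thm:collapse_t} inside it, so the two arguments rest on the same three pillars (S2 for acyclicity, A1/A2 for the continuous-mapping limit, finiteness of $N$ and $T$). What each buys: the paper's reduction is shorter and modular, reusing Theorem~\ref{thm:collapse_t} wholesale, at the cost of treating the whole slice as one opaque block; your version is more self-contained and makes explicit where each spatial assumption enters per unit---in particular your observation that Assumption~\ref{assump:spatial_homogeneity} supplies a single modulus of continuity across units, and your explicit handling of uniformity of the empirical-measure convergence over the random realized neighbor states, are points the paper's proof passes over silently inside the reduction. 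Neither route has a gap; yours is simply the unrolled form of theirs, with somewhat more careful bookkeeping at the one place (the interaction of the $m\to\infty$ limit with spatial conditioning) where the paper relies on the reader to trust the reduction.
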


\begin{proof}[Proof Sketch]
We construct a single ``Super-Unit'' model, $\calM_{super}$, whose state at time $t$ is the concatenated vector of states of all $N$ spatial units, i.e., $\bm{S}_{super,t} = (\bm{S}_{1,t}, \dots, \bm{S}_{N,t})$. Assumption \ref{assump:spatial_order} guarantees that the internal dependencies within this Super-Unit's state vector at any time $t$ are acyclic. Consequently, $\calM_{super}$ is a valid (though high-dimensional) T-HCM that satisfies all premises of Theorem \ref{thm:collapse_t}. Since the collapsing process for $\calM_{super}$ is mathematically equivalent to the collapsing process for the original ST-HCM, the convergence guaranteed by Theorem \ref{thm:collapse_t} for $\calM_{super}$ directly implies the convergence for the ST-HCM. The full proof is provided in Appendix~\ref{app:proof_st_hcm_collapse}.
\end{proof}

\section{Identification and Estimation}
\label{sec:inference}
Having established definition and convergence properties for ST-HCMs, we now address its primary purpose: enabling causal inference from spatio-temporal data. This section tackles identifiability, determining when causal effects can be uniquely computed from observational data. We develop sufficient conditions for identifiability in ST-HCMs.

\subsection{The Identifiability Problem in Dynamic HCMs}
\label{subsec:id_problem}
The fundamental challenge of causal inference in dynamic hierarchical systems is threefold (Figure~\ref{fig:iden_plot}). First, we face unobserved unit-level confounding, represented by variable $U_i$. Second, the system's history acts as a confounder for current treatments and future outcomes. Third, neighboring units' states can confound effects, creating spatial dependencies that traditional methods cannot address.

\begin{figure}[t]
    \centering
    \includegraphics[width=0.48\textwidth]{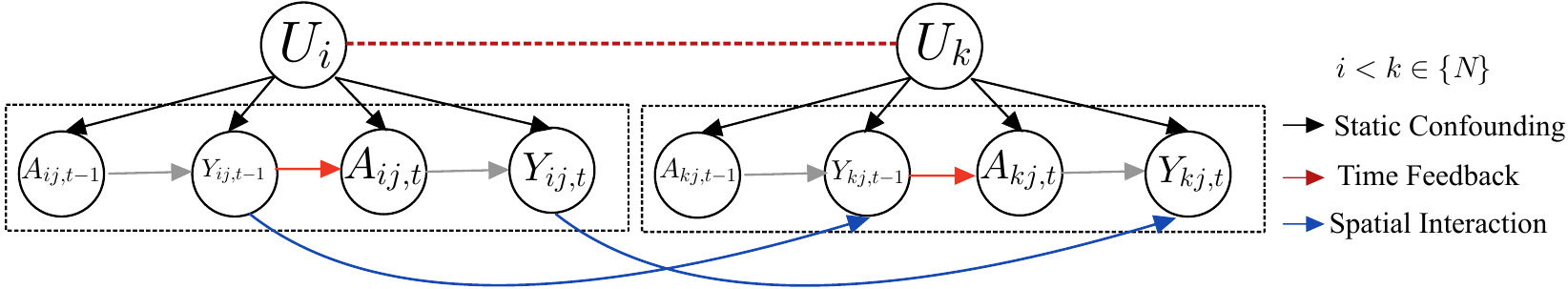}
    \caption{The challenge of identifiability.} 
    \label{fig:iden_plot}
\end{figure}

Our goal is to identify the full post-intervention distribution $P(Y_{k,t} | \doop(A_{i,t'} = a^*))$, which describes the outcome at a location $k$ and time $t$ under an intervention on a treatment at location $i$ and time $t'$. Thanks to the convergence theorems (Theorems \ref{thm:collapse_t} and \ref{thm:collapse_st}), this problem is equivalent to identifying the effect in the corresponding Dynamic Collapsed Model (DCM). Our identifiability results are derived by analyzing the graphical structure of this DCM.

\subsection{Identifiability via Sequential Adjustability}
\label{subsec:id_adjustability}

Our first identification strategy relies on the principle of adjusting for a sufficiently rich set of observed covariates to block all backdoor paths between treatment and outcome.
We begin by stating the result for the purely temporal case, which forms the foundation for the more general theorem.

\begin{proposition}[Identifiability in T-HCMs via Adjustability]
\label{prop:id_t_hcm_adj}
In a T-HCGM satisfying the premises of Theorem \ref{thm:collapse_t}, the causal effect of a subunit-level treatment $A_{ij,t'}$ on a future outcome $Y_{ij,t}$ ($t \ge t'$) is identifiable if, in its corresponding DCM, all backdoor paths from the treatment node $Q_{i,t'}^A$ to the outcome node $Q_{i,t}^Y$ are blocked by conditioning on the observable macro-history $\bm{S}_{i,<t'}$.
\end{proposition}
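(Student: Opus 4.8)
The plan is to reduce the claim to a standard backdoor argument inside the flat Dynamic Collapsed Model (DCM) and then transfer the conclusion back to the original T-HCGM via the convergence guarantee of Theorem~\ref{thm:collapse_t}. First I would invoke that theorem: as $m\to\infty$, the marginalized macro-state law $P_{m,T}^{marg}$ converges in KL to $P_{col,T}$, the DCM distribution. Since the unrolling of $\calG_T$ over the finite horizon $\{1,\dots,T\}$ is a DAG in which the Q-variables appear as ordinary endogenous nodes, the DCM is a flat, finite-dimensional causal graphical model on which the $\doop$-operator and the backdoor criterion are well defined. The target effect $P(Y_{ij,t}\mid\doop(A_{ij,t'}=a^*))$ thus corresponds in the large-$m$ limit to an interventional query in the DCM, so it suffices to identify the latter.

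Next I would establish the intervention correspondence. The Q-variable $Q_{i,t'}^A$ encodes the within-unit treatment-generating mechanism at time $t'$, so fixing the subunit treatment is represented in the collapsed graph by intervening on $Q_{i,t'}^A$, while $Q_{i,t}^Y$ encodes the conditional outcome law from which $Y_{ij,t}$ is drawn. I would argue that graph surgery on $A_{ij,t'}$ in the finite-$m$ structural model and surgery on $Q_{i,t'}^A$ in the DCM induce the same limiting post-intervention macro-distribution, because the collapse operation commutes with the edge deletion induced by $\doop$: both merely remove the incoming edges of the treatment mechanism and leave every other kernel intact.

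Given the premise that $\bm{S}_{i,<t'}$ blocks every backdoor path from $Q_{i,t'}^A$ to $Q_{i,t}^Y$, and noting that $\bm{S}_{i,<t'}$ is strictly pre-treatment and hence contains no descendant of $Q_{i,t'}^A$, the macro-history satisfies the backdoor criterion. Backdoor adjustment then yields
\begin{align*}
P\bigl(Q_{i,t}^Y \mid \doop(Q_{i,t'}^A = q)\bigr)
= \int P\bigl(Q_{i,t}^Y \mid Q_{i,t'}^A = q,\, \bm{s}\bigr)\, dP(\bm{s}),
\end{align*}
where $\bm{s}$ ranges over values of $\bm{S}_{i,<t'}$. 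The key point is that the unobserved confounder $U_i$ need not be adjusted for, since every path through it is intercepted by the macro-history; consequently every term on the right is a functional of the observational DCM law, and identifiability follows.

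The main obstacle I anticipate is not the backdoor step, which is mechanical, but rigorously justifying that intervention commutes with the $m\to\infty$ collapse, i.e.\ that $P(Y_{ij,t}\mid\doop(A_{ij,t'}))$ in the mutilated finite-$m$ model converges to the interventional DCM quantity. This requires re-applying Theorem~\ref{thm:collapse_t} to the post-intervention (edge-deleted) model and verifying that Assumptions~\ref{assump:mech_conv_t} and~\ref{assump:regularity_t} are inherited by the mutilated graph; this should hold because deleting the treatment's incoming edges only simplifies the dependence structure while preserving continuity of the surviving mechanisms. A secondary check is that conditioning on $\bm{S}_{i,<t'}$ opens no collider path through $U_i$, which follows from its pre-treatment status.
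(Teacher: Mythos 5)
Your reduction to the DCM and the backdoor machinery is fine as far as it goes, but your proof misses the central difficulty the proposition is actually about: the static confounding path $Q_{i,t'}^A \leftarrow U_i \rightarrow Q_{i,t}^Y$. In the DCM, $U_i$ is a direct parent of every Q-variable (the Q-evolution is $\bmQ_{i,t} \sim p(\bm{q}_t \mid U_i, \bm{S}_{i,<t})$), so this two-edge fork can only be blocked by conditioning on $U_i$ itself; no amount of conditioning on the observable macro-history $\bm{S}_{i,<t'}$ can intercept it. Your claim that the unobserved confounder $U_i$ "need not be adjusted for, since every path through it is intercepted by the macro-history" is therefore false, and if you instead read the premise literally as asserting that even this path is blocked by $\bm{S}_{i,<t'}$, the premise becomes unsatisfiable whenever $U_i$ genuinely confounds treatment and outcome, rendering the proposition vacuous and the hierarchical structure irrelevant. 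The paper's intended reading (explicit in its proof) is that the macro-history blocks only the \emph{dynamic} backdoor paths arising from the system's evolution.

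What the paper does after the adjustment step, and what your proof lacks, is the hierarchical step. Having conditioned on a history realization $\bm{s}_{<t'}$, the paper writes the conditional interventional distribution as an expectation over $U_i$ via the law of total probability, observes that once $U_i$ is additionally conditioned on all backdoor paths are blocked so the $\doop$-operator can be removed, and then notes that although $U_i$ is unobserved, the inner conditional quantity is estimable within each unit (where $U_i$ is fixed) from its subunit-level panel data, with the outer expectation obtained by averaging these unit-specific estimates across the population of units. This two-step procedure of within-unit estimation followed by hierarchical averaging is precisely where the HCM structure does its work; without it there is no identification. Your secondary point, that intervention should commute with the $m\to\infty$ collapse and that Theorem~\ref{thm:collapse_t} should be re-applied to the mutilated model, is a legitimate observation about a step the paper treats only implicitly, but it does not repair this main gap.
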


\begin{proof}[Proof Sketch]
The proof operates on the DCM. Conditioning on the observable history $\bm{S}_{i,<t'}$ blocks all confounding paths originating from the system's dynamic evolution. The only remaining backdoor path is the static confounding path $Q_{i,t'}^A \leftarrow U_i \rightarrow Q_{i,t}^Y$. This path is resolved by the hierarchical structure itself: we first estimate the conditional effect within each unit $i$ (where $U_i$ is fixed), and then average these estimates across all units. This two-step procedure of sequential adjustment followed by hierarchical averaging allows for identification. The full proof is in Appendix~\ref{app:prop_id_t_hcm_adj}.
\end{proof}
This principle naturally extends to the full spatio-temporal setting, where the adjustment set must also include the history of spatial neighbors. 

\begin{theorem}[Identifiability in ST-HCMs via Adjustability]
\label{thm:id_st_hcm_adj}
In an ST-HCGM satisfying the premises of Theorem \ref{thm:collapse_st}, the causal effect $P(Y_{k,t} | \doop(A_{i,t'} = a^*))$ is identifiable if, in its ST-DCM, all backdoor paths from the treatment $Q_{i,t'}^A$ to the outcome $Q_{k,t}^Y$ are blocked by conditioning on the observable spatio-temporal history. This history comprises the macro-states of unit $i$ and its neighbors $\calN(i)$ prior to time $t'$.
\end{theorem}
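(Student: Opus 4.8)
The plan is to reduce the spatio-temporal identification problem to the purely temporal one already settled in Proposition~\ref{prop:id_t_hcm_adj}, reusing the ``Super-Unit'' device from the proof of Theorem~\ref{thm:collapse_st}. First I would invoke Theorem~\ref{thm:collapse_st} to move the entire analysis into the ST-DCM: since the target $P(Y_{k,t}\mid\doop(A_{i,t'}=a^*))$ is a continuous functional of the model's transition kernels, its convergence follows from the KL convergence $P_{col,T,N}\to P_{m,T,N}^{marg}$, so it suffices to identify the effect in the collapsed model. Concatenating the $N$ units into a single Super-Unit with macro-state $\bm{S}_{super,t}=(\bm{S}_{1,t},\dots,\bm{S}_{N,t})$, Assumption~\ref{assump:spatial_order} guarantees that the contemporaneous dependencies are acyclic, so the Super-Unit is a bona fide T-HCM whose collapsed model coincides with the ST-DCM. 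The treatment node $Q_{i,t'}^A$ and the outcome node $Q_{k,t}^Y$ then live in a single temporal DCM, which is exactly the setting of Proposition~\ref{prop:id_t_hcm_adj}.

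Next I would carry out the graphical adjustment inside this Super-Unit DCM. By hypothesis, conditioning on the stated spatio-temporal history blocks every backdoor path between $Q_{i,t'}^A$ and $Q_{k,t}^Y$ that runs through the dynamic evolution or the spatial couplings. The point requiring justification is that the adjustment set may be restricted to the macro-states of unit $i$ and its neighbors $\calN(i)$ prior to $t'$, rather than the full Super-Unit history: this is precisely the content of the Spatial Markov Property (Assumption~\ref{assump:spatial_markov}), which renders $Q_{i,t'}^A$ conditionally independent of all non-neighboring histories given the histories of $i$ and $\calN(i)$, so that the omitted coordinates are d-separated from the treatment and cannot carry unblocked backdoor signal. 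Assumption~\ref{assump:spatial_homogeneity} then ensures the collapsed mechanisms are identical across units, so the same $Q$-variable factorization is available everywhere and the restricted adjustment functional is well defined.

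With the dynamic and spatial backdoor paths blocked, the only surviving confounding runs through the unobserved static variables $\{U_l\}$, now bundled into the single Super-Unit latent $\bm{U}=(U_1,\dots,U_N)$. Here I would import the hierarchical resolution of Proposition~\ref{prop:id_t_hcm_adj}: the collapse reveals, in the $m\to\infty$ limit, the within-configuration mechanisms as deterministic $Q$-variables of $\bm{U}$ and the macro-history, so estimating the conditional effect at fixed $\bm{U}$ and then averaging according to the hierarchical structure closes the static backdoor $Q_{i,t'}^A\leftarrow\bm{U}\rightarrow Q_{k,t}^Y$. This produces the adjustment formula obtained by integrating the collapsed outcome kernel against the conditional law of $Q_{i,t'}^A$ over the blocking history, establishing identifiability.

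I expect the main obstacle to be the cross-unit static confounding that appears when $k\neq i$, which has no analogue in the purely temporal Proposition~\ref{prop:id_t_hcm_adj}. Whereas there the sole static path passes through a single $U_i$ shared by treatment and outcome, here the outcome unit's latent $U_k$ (together with the latents of intermediate units on the spatial path from $i$ to $k$) can open a backdoor only if they also influence the treatment through the history, so the argument must verify that folding them into $\bm{U}$ and applying the Super-Unit's hierarchical averaging genuinely severs this path and not merely the within-unit one. A second, related delicacy is the intervention semantics: I must check that $\doop(A_{i,t'}=a^*)$, a local subunit-level intervention, commutes with both the collapse map and the Super-Unit concatenation, i.e.\ that intervening in the ST-HCGM and then collapsing gives the same post-intervention kernel as intervening directly on $Q_{i,t'}^A$ in the Super-Unit DCM. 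Once these two points are secured, the reduction to Proposition~\ref{prop:id_t_hcm_adj} is immediate.
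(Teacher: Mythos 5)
Your proposal is correct, and its substance matches the paper's: both translate the problem to the ST-DCM via Theorem~\ref{thm:collapse_st}, block the dynamic and spatial backdoor paths by adjusting for the observable spatio-temporal history, and then close the residual static confounding by conditioning on the full vector of latents $(U_1,\dots,U_N)$ and averaging the resulting unit-conditional effects over the population. The organizational route differs, though. The paper proves the theorem \emph{directly} in the ST-DCM as a generalization of Proposition~\ref{prop:id_t_hcm_adj}: it writes the backdoor adjustment integral over the spatio-temporal history $h_{st}$, observes that the only remaining paths are mediated by $U_{all}=(U_1,\dots,U_N)$, and invokes the hierarchical averaging argument anew. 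You instead \emph{reduce} to Proposition~\ref{prop:id_t_hcm_adj} by importing the Super-Unit construction, which the paper uses only in the proof of Theorem~\ref{thm:collapse_st}, so that the spatio-temporal DCM becomes a single temporal DCM with static confounder $\bm{U}$ and the proposition applies verbatim. This buys formal economy (no re-derivation of the adjustment step) at the cost of extra obligations you correctly identify: that the Super-Unit's collapsed model coincides with the ST-DCM, that the adjustment set may be restricted from the full Super-Unit history to the histories of $i$ and $\calN(i)$ (your appeal to the Spatial Markov property, Assumption~\ref{assump:spatial_markov}, supplies this, and is a point the paper leaves implicit in its premise), and that the local $\doop$-intervention commutes with concatenation and collapse (the paper assumes this equivalence without comment in Section~\ref{subsec:id_problem}). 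Your flagged ``main obstacle'' of cross-unit static confounding when $k\neq i$ is resolved in the paper exactly as you propose: conditioning on the entire vector $U_{all}$ rather than the single $U_i$, then averaging; so no genuine gap remains on either route.
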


\begin{proof}[Proof Sketch]
The logic directly parallels that of Proposition \ref{prop:id_t_hcm_adj}. The adjustment set is expanded to include the histories of neighbors, $\{\bm{S}_{j,<t'}\}_{j \in \calN(i)}$, to block confounding paths that traverse space. After this spatio-temporal adjustment, the remaining confounding is purely from the static, unit-level variables $\{U_k\}$. As before, this static confounding is handled by leveraging the subunit-level variation within each unit to estimate conditional effects, which are then averaged. The full proof is in Appendix~\ref{app:thm_id_st_hcm_adj}.
\end{proof}

\subsection{Identifiability via Dynamic Instruments}
\label{subsec:id_instrument}

When a sufficiently rich history is not observable, we can achieve identification by leveraging an exogenous source of variation---an instrumental variable.

\begin{proposition}[Identifiability in T-HCMs via Instruments]
\label{prop:id_t_hcm_iv}
In a T-HCGM, the causal effect of $A_{ij,t'}$ on $Y_{ij,t}$ is identifiable if there exists a valid subunit-level instrumental variable $Z_{ij,t'}$. A valid instrument must (1) be a cause of $A_{ij,t'}$ (relevance) and (2) have no unblocked backdoor path to $Y_{ij,t}$ in the DCM that does not pass through $A_{ij,t'}$.
\end{proposition}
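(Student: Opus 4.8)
The plan is to operate entirely within the Dynamic Collapsed Model (DCM), invoking Theorem~\ref{thm:collapse_t} to transport any identification result obtained there back to the T-HCGM: since $P_{m,T}^{marg} \to P_{col,T}$ in $\KL$ as $m \to \infty$, the functional of the observed law that recovers the effect in the DCM also recovers it in the large-$m$ limit of the hierarchical model. In the DCM the subunit treatment $A_{ij,t'}$, outcome $Y_{ij,t}$, and candidate instrument $Z_{ij,t'}$ are summarized by the Q-variable nodes $Q_{i,t'}^A$, $Q_{i,t}^Y$, and $Q_{i,t'}^Z$, whose joint within-unit law is fully observed as $m \to \infty$. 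The target $P(Y_{ij,t} \mid \doop(A_{ij,t'}=a^*))$ therefore reduces to the within-unit structural response of $Q_{i,t}^Y$ to an intervention on $Q_{i,t'}^A$, subsequently averaged over the unit population $P(U)$.

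The first substantive step is to isolate the confounding that the instrument must neutralize. As in Proposition~\ref{prop:id_t_hcm_adj}, the static path $Q_{i,t'}^A \leftarrow U_i \to Q_{i,t}^Y$ is handled by the hierarchy, since $U_i$ is fixed within a unit and integrated out across units. The novelty is that the dynamic backdoor paths routed through the history $\bmS_{i,<t'}$ cannot be blocked by conditioning, because that history is assumed unobservable. I would represent this residual confounding by a latent variable $W$ entering both $Q_{i,t'}^A$ and $Q_{i,t}^Y$, and then verify the two stated conditions graphically in the unrolled DCM, conditional on the unit: (1)~relevance, i.e.\ a d-connected directed path $Q_{i,t'}^Z \to Q_{i,t'}^A$; and (2)~the exclusion/exogeneity restriction, i.e.\ every path from $Q_{i,t'}^Z$ to $Q_{i,t}^Y$ is blocked or passes through $Q_{i,t'}^A$, so that $Q_{i,t'}^Z \perp W$ and no direct edge $Q_{i,t'}^Z \to Q_{i,t}^Y$ survives.

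With validity established, I would construct the IV estimand within each unit. The observed within-unit conditionals and the structural kernel are linked by the nonparametric IV integral equation $p(Q^Y \mid Q^Z) = \int p(Q^Y \mid \doop(Q^A))\, p(Q^A \mid Q^Z)\, dQ^A$, obtained by marginalizing $W$ and invoking exogeneity and exclusion. Solving this equation for the structural kernel $p(Q^Y \mid \doop(Q^A))$ within each unit and then averaging over $P(U)$ yields the desired effect, which Theorem~\ref{thm:collapse_t} transports back to the hierarchical model.

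The main obstacle is the \emph{solvability} of that integral equation: relevance and exclusion alone guarantee only that the equation \emph{holds}, not that its solution is unique, so point identification requires an additional rank/completeness condition on the operator mapping functions of $Q^A$ to functions of $Q^Z$ (or, alternatively, a parametric or monotonicity restriction that collapses the equation to a Wald-type ratio). I would therefore either fold such a completeness condition into the definition of a ``valid instrument'' or state the result as identification up to the solution set of the integral equation. A secondary difficulty is the exclusion check itself: in the unrolled dynamic graph one must rule out paths from $Q_{i,t'}^Z$ to $Q_{i,t}^Y$ that traverse the Q-variable evolution and the temporal transitions between $t'$ and $t$, which demands a careful d-separation analysis of the full dynamic DCM rather than a single time slice. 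The full proof is in Appendix~\ref{app:prop_id_t_hcm_iv}.
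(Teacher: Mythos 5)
Your proposal follows essentially the same route as the paper's proof: pass to the DCM via the collapse theorem, write the observed law as a Fredholm integral equation of the first kind, $P(Y_t \mid Z_{t'}=z) = \int P(Y_t \mid \doop(A_{t'}=a))\, P(A_{t'}=a \mid Z_{t'}=z)\, da$, and solve for the structural kernel under a completeness condition. The completeness gap you flag is treated the same way in the paper, which invokes it as a ``standard regularity condition'' citing Newey and Powell (2003) rather than building it into the definition of a valid instrument, and your within-unit-then-average handling of $U_i$ is only a minor variant of the paper's blunter premise that a valid instrument is d-separated from $U_i$ and the unobserved history outright.
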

%  (exclusion restriction)

\begin{proof}[Proof Sketch]
A valid instrument $Z_{ij,t'}$ in the DCM, represented by $Q_{i,t'}^Z$, is d-separated from all confounders, including the static $U_i$ and any unobserved history. This allows us to formulate a Fredholm integral equation of the first kind: $P(Y_t | Z_{t'}=z) = \int P(Y_t | \doop(A_{t'}=a)) P(A_{t'}=a | Z_{t'}=z) da$. Since the terms involving the observational distribution are identifiable, the causal effect $P(Y_t | \doop(A_{t'}=a))$ can be recovered by solving this integral equation. The full proof is in Appendix~\ref{app:prop_id_t_hcm_iv}.
\end{proof}

This result also generalizes to the spatio-temporal framework, requiring a correspondingly stronger instrument.

\begin{theorem}[Identifiability in ST-HCMs via Instruments]
\label{thm:id_st_hcm_iv}
In an ST-HCGM, the causal effect $P(Y_{k,t} | \doop(A_{i,t'} = a^*))$ is identifiable if there exists a valid spatio-temporal subunit-level instrumental variable $Z_{ij,t'}$. A valid instrument must satisfy the exclusion restriction with respect to the entire system: its node $Q_{i,t'}^Z$ in the ST-DCM must be d-separated from all static confounders $\{U_k\}$ and the complete, unobserved history of the entire spatio-temporal system.
\end{theorem}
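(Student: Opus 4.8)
The plan is to reduce the spatio-temporal instrumental-variable problem to the purely temporal one already settled in Proposition~\ref{prop:id_t_hcm_iv}, exploiting the same ``Super-Unit'' device that powered Theorem~\ref{thm:collapse_st}. First I would invoke Theorem~\ref{thm:collapse_st} to pass from the ST-HCGM to its limiting ST-DCM: since $\lim_{m\to\infty}\KL(P_{col,T,N}\,||\,P_{m,T,N}^{marg})=0$, any functional of the macro-state distribution that is identifiable in the ST-DCM is also identifiable in the ST-HCGM, so it suffices to identify $P(Y_{k,t}\,|\,\doop(A_{i,t'}=a^*))$ inside the collapsed model. Next I would form the Super-Unit $\calM_{super}$ whose state is the concatenation $\bmS_{super,t}=(\bmS_{1,t},\dots,\bmS_{N,t})$; by the argument in the proof of Theorem~\ref{thm:collapse_st}, the acyclicity granted by the spatial ordering (Assumption~\ref{assump:spatial_order}) makes $\calM_{super}$ a bona fide T-HCM, and its collapsed DCM coincides with the ST-DCM. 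The treatment node $Q_{i,t'}^A$, the outcome node $Q_{k,t}^Y$, and the candidate instrument node $Q_{i,t'}^Z$ all live in this single high-dimensional DCM.

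The heart of the argument is then to verify that the stated validity conditions are exactly the two hypotheses Proposition~\ref{prop:id_t_hcm_iv} requires for the Super-Unit. Relevance transfers verbatim, since $Z_{ij,t'}$ remains a cause of $A_{i,t'}$ after collapsing. For the exclusion restriction, I would show that the d-separation of $Q_{i,t'}^Z$ from the full static confounder set $\{U_k\}_{k=1}^{N}$ and from the entire unobserved history guarantees that every backdoor path from $Q_{i,t'}^Z$ to $Q_{k,t}^Y$ in the Super-Unit graph is blocked, and that the only directed influence of $Q_{i,t'}^Z$ on $Q_{k,t}^Y$ factors through $Q_{i,t'}^A$. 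With these two conditions in hand, Proposition~\ref{prop:id_t_hcm_iv} yields the Fredholm integral equation of the first kind,
\begin{align}
  P(Y_{k,t}\,|\,Z_{ij,t'}=z) = \int P(Y_{k,t}\,|\,\doop(A_{i,t'}=a))\, P(A_{i,t'}=a\,|\,Z_{ij,t'}=z)\, da,
\end{align}
in which both observational factors are identifiable, so the causal effect is recovered by solving for the unknown kernel $P(Y_{k,t}\,|\,\doop(A_{i,t'}=a))$.

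The main obstacle is the verification of the exclusion restriction in the spatial setting, which is genuinely harder than in the temporal case because the instrument sits at unit $i$ while the outcome sits at a possibly distant unit $k$. Beyond the temporal backdoor paths through the shared history, there is now a whole family of spatial spillover paths that leave unit $i$, traverse intermediate neighbors, and arrive at unit $k$ --- including paths that route through the static confounders $U_k$ of units other than $i$. This is precisely why the theorem demands d-separation ``with respect to the entire system'' rather than merely with respect to unit $i$. I would discharge this step by combining the spatial Markov property (Assumption~\ref{assump:spatial_markov}), which bounds which units can transmit influence at each step, with the spatial ordering (Assumption~\ref{assump:spatial_order}), which orients contemporaneous edges and thereby prevents these spillover paths from forming active backdoor connections once $Q_{i,t'}^Z$ is excluded from all $\{U_k\}$ and from the unobserved history. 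A secondary, more technical caveat --- inherited unchanged from the temporal case --- is the well-posedness of the integral equation: uniqueness of the solution requires a completeness (injectivity) condition on the conditional $P(A_{i,t'}\,|\,Z_{ij,t'})$, which I would state as a standing regularity assumption rather than re-derive.
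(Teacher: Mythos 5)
Your proposal is correct and follows essentially the same route as the paper: both work in the ST-DCM, use the system-wide exclusion restriction (d-separation of $Q_{i,t'}^Z$ from all $\{U_k\}$ and the full unobserved history) to justify the Fredholm integral equation $P(Y_{k,t}\,|\,Z_{i,t'}=z)=\int g_k(a)\,P(A_{i,t'}=a\,|\,Z_{i,t'}=z)\,da$ with $g_k(a)\coloneqq P(Y_{k,t}\,|\,\doop(A_{i,t'}=a))$, and recover $g_k$ under a completeness condition on the treatment--instrument kernel. The only difference is one of packaging: the paper simply asserts the proof is ``identical in structure'' to the temporal IV proposition, whereas you make that reduction formal via the Super-Unit construction and spell out why spatial spillover backdoor paths are handled by the system-wide exclusion hypothesis --- added rigor on top of the same underlying argument, not a different one.
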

% The full proof is in Appendix C.1 and C.2.
% All identification-related proofs are in Appendix C.
\begin{proof}[Proof Sketch]
The proof is identical in structure to that of Proposition \ref{prop:id_t_hcm_iv}. The instrument's exogenous variation allows us to establish the same integral equation relating the observational distribution to the desired causal effect. The key difference lies in the strength of the required assumption: the instrument must be exogenous not just to the history of its own unit, but to the history of the entire interacting system. Given such an instrument, identification follows. The full proof is in Appendix~\ref{app:thm_id_st_hcm_iv}.
\end{proof}

\subsection{ATE with Time-Invariant Spatial Confounding}
% *动机与问题选择
% To demonstrate the practical power of our framework, we now focus on a canonical yet challenging problem that plagues many real-world applications: \textbf{estimating the average treatment effect in the presence of unobserved, time-invariant spatial confounding.} This problem is particularly insightful because it represents a scenario where traditional spatio-temporal methods often fail, but the hierarchical structure provided a unique solution.
To demonstrate our framework's practical power, we focus on a canonical yet challenging problem in real-world applications: \textbf{estimating average treatment effects with unobserved, time-invariant spatial confounding.} This scenario is particularly insightful because traditional spatio-temporal methods often fail, while our hierarchical structure provides a unique solution.

% 给出Problem Formulation
\subsubsection{Problem Formulation}
% 配图8
We consider an ST-HCM where the primary challenge is an unobserved, time-invariant, unit-level confounder, $U_i$. This confounder affects the treatment assignment, outcomes, and potentially the dynamics of neighboring units through spatial spillover, creating complex confounding across both time and space.

Our goal is to identify and estimate the \textbf{Average Treatment Effect (ATE)} of a global, static treatment policy. Specifically, we are interested in the effect on the outcome at a future time horizon $T$, resulting from setting the subunit-level treatment $A$ to a constant value $a^*$ for all units $i$ and all time steps $t \le T$. The formal estimand is:
\begin{equation}
\label{eq:ate_definition}
\text{ATE}_T = \mathbb{E}[Y_T | \text{do}(A_t=1)] - \mathbb{E}[Y_T | \text{do}(A_t=0)],
\end{equation}
where $\mathbb{E}[Y_T | \text{do}(A_t=a^*)]$ denotes the expected global average outcome at time $T$ under the specified policy.

\subsubsection{Identification Strategy}
The identification of this ATE is made possible by our theoretical framework, specifically by applying Theorem~\ref{thm:id_st_hcm_adj}. The strategy involves three conceptual steps: (1) collapsing the ST-HCM to its equivalent ST-DCM, as guaranteed by Theorem~\ref{thm:collapse_st}; (2) applying a conditional version of sequential adjustment on the ST-DCM; and (3) leveraging the hierarchical structure to average out the unobserved confounder $U_i$.

Assuming that the dynamic and spatial confounding paths are blockable by the observable history (comprising the history of unit $i$ and its neighbors $\mathcal{N}(i)$), Theorem 3 ensures that the ATE is identifiable. The identification formula can be expressed as an expectation over the distribution of the unobserved confounders $\mathbf{U} = (U_1, ..., U_N)$:
\begin{equation*}
\label{eq:identification_formula}
\mathbb{E}[Y_T | \text{do}(A_t=a^*)] = \mathbb{E}_{\mathbf{U}} \left[ \mathbb{E}_{\text{G-Comp}}[Y_T | \text{do}(A_t=a^*), \mathbf{U}] \right].
\end{equation*}
The inner expectation, $\mathbb{E}_{\text{G-Comp}}[\cdot]$, represents the conditional causal effect for units with fixed static properties $\mathbf{U}$, which can be computed via a recursive procedure analogous to Robins' G-Computation formula. The outer expectation, $\mathbb{E}_{\mathbf{U}}[\cdot]$, is then computed by averaging these conditional effects over the population of units.

\subsubsection{Estimation Algorithm}
We translate the identification strategy into a practical, two-stage estimation algorithm.

\paragraph{Stage 1: Learning Conditional Dynamics.}
For each unit $i$, we estimate a unit-specific model, $\mathcal{M}_i$, that captures the full spatio-temporal dynamics conditional on its latent static properties $U_i$. This model is trained on the subunit-level panel data centered at unit $i$, including the histories of its neighbors.
The choice of model for $\mathcal{M}_i$ is flexible. Any method capable of capturing the relevant temporal and spatial dependencies can be employed, provided it can yield an estimate of the conditional expectation of the outcome. As our core contribution lies in the causal identification framework itself, rather than a specific predictive model, we prioritize methods that are well-established and interpretable. Suitable candidates range from semi-parametric models, such as Linear Mixed-Effects Models (LMM) , to non-parametric approaches like Gradient Boosting Machines (GBM). The specific implementation of $\mathcal{M}_i$ and further details are provided in Appendix~\ref{app:ATE_Estimation}.
% For our estimation model $\mathcal{M}_i$, we prioritize flexibility and interpretability while avoiding unnecessary complexity from deep neural networks, as our core contribution lies in the causal identification framework itself. Therefore, we employ a Gaussian Process (GP) with a composite spatio-temporal kernel. This choice allows us to capture complex non-linear dynamics and spatial dependencies in a principled, non-parametric manner. The detailed specification of our GP model is provided in Appendix D.

\paragraph{Stage 2: Conditional G-Computation via Simulation.}
With the trained models $\{\mathcal{M}_i\}_{i=1}^N$, we simulate the effect of the intervention for each unit and then aggregate the results. The procedure is detailed in Algorithm~\ref{alg:st-hcm-ate}. For each unit $i$, we recursively simulate the system's evolution under a fixed treatment policy $a^*$. At each time step $t$, the model $\mathcal{M}_i$ predicts the outcome $\hat{y}_{i,t}(a^*)$ based on the simulated history of unit $i$ and the observed history of its neighbors. The final outcome at time $T$, $\hat{\mu}_i(T; a^*)$, represents an estimate of the conditional causal effect for that unit. Finally, we average these estimates across all units to obtain the ATE.

\begin{algorithm}[tb]
   \caption{ATE Estimation in ST-HCM}
   \label{alg:st-hcm-ate}
\begin{algorithmic}[1]
   \STATE {\bfseries Input:} Spatio-temporal data $\mathcal{D}$, time horizon $T$.
   \STATE {\bfseries Output:} Estimated Average Treatment Effect $\widehat{\text{ATE}}_T$.
   
   \vspace{0.5em}
   \STATE \textbf{Stage 1: Learning Conditional Dynamics}
   \STATE Train a set of unit-specific models $\{\mathcal{M}_i\}_{i=1}^N$ from $\mathcal{D}$.
   
   \vspace{0.5em}
   \STATE \textbf{Stage 2: Simulation and Aggregation}
   \STATE Initialize result sets $\mathcal{R}_0 \leftarrow \emptyset$, $\mathcal{R}_1 \leftarrow \emptyset$.
   \FOR{$a^* \in \{0, 1\}$}
     \FOR{$i=1$ {\bfseries to} $N$}
       % \STATE Initialize simulation history `sim\_hist` for unit $i$.
       \STATE Initialize simulation history $\mathcal{H}_i $.% \leftarrow \text{get\_initial\_history}(i, \mathcal{D})
       \FOR{$t=1$ {\bfseries to} $T$}
         \STATE Get neighbors' history $\mathcal{H}_{\mathcal{N}(i), <t}$ from $\mathcal{D}$.
         % \STATE Predict outcome $\hat{y}_{i,t}(a^*) \leftarrow \mathcal{M}_i(\text{do}(A_t=a^*), \text{sim\_hist}, \mathcal{H}_{\mathcal{N}(i), <t})$.
          \STATE Predict outcome $\hat{y}_{i,t}^{(a^*)} \leftarrow \mathcal{M}_i(\text{do}(A_t=a^*), \mathcal{H}_i, \mathcal{H}_{\mathcal{N}(i), <t})$.
         % \STATE Update `sim\_hist` with $(a^*, \hat{y}_{i,t}(a^*))$.
          \STATE Update $\mathcal{H}_i \leftarrow \mathcal{H}_i \cup \{(a^*, \hat{y}_{i,t}^{(a^*)})\}$.
       \ENDFOR
       % \STATE Append final outcome $\hat{y}_{i,T}(a^*)$ to `final\_effects[a*]`.
        \STATE Add final outcome $\hat{y}_{i,T}^{(a^*)}$ to result set $\mathcal{R}_{a^*}$.
     \ENDFOR
   \ENDFOR
   
   \vspace{0.5em}
   \STATE \textbf{Final Calculation}
   \STATE $\hat{\mathbb{E}}[Y_T | \text{do}(A_t=1)] \leftarrow \text{mean}(\mathcal{R}_{1})$.
   \STATE $\hat{\mathbb{E}}[Y_T | \text{do}(A_t=0)] \leftarrow \text{mean}(\mathcal{R}_{0})$.
   \STATE $\widehat{\text{ATE}}_T \leftarrow \hat{\mathbb{E}}[Y_T | \text{do}(A_t=1)] - \hat{\mathbb{E}}[Y_T | \text{do}(A_t=0)]$.
   \STATE \bfseries{return} \mdseries{$\widehat{\text{ATE}}_T$}
\end{algorithmic}
\end{algorithm}

\section{Experiments}
We conduct experiments on both a synthetic setting and a real-world case to evaluate ST-HCMs. We design 4 research questions to guide the evaluation: 

% \end{itemize}
\begin{itemize}
    \item \textbf{RQ1 (Correctness)} Can our method recover known causal effects in an ideal synthetic setting?
    \item \textbf{RQ2 (Superiority)} How does our framework perform compared to baselines that neglect either the hierarchical structure or spatio-temporal dependencies of the data?
    \item \textbf{RQ3 (Robustness)} How robust is our method to violations of its key assumptions?
    \item \textbf{RQ4 (Applicability)} How sensitive are causal estimates to underlying structural assumptions in a real-world system, and how does ST-HCM address this challenge?
\end{itemize}

We implement our estimator with both parametric Linear Mixed Models (LMM)~\citep{Faraway2006} and non-parametric Gradient Boosting Machines (GBM) models ~\citep{10.3389/fnbot.2013.00021} to serve as the unit-specific conditional mechanism $\mathcal M_i$. Full details of our experimental setup and data generation processes are provided in Appendix~\ref{app:exp}.
% Gaussian Process (GP) 

\subsection{Simulation Experiments}
% \label{subsec:correctness}
\subsubsection{Correctness}
To verify the fundamental correctness of our proposed estimation procedure, we first assess its ability to produce unbiased and consistent estimates of the Average Treatment Effect (ATE). We generate data from a canonical spatio-temporal hierarchical process with a known true ATE. 

Figure~\ref{fig:bias_and_consis} demonstrates the statistical properties of our ST-HCM estimator. The left panel shows the distribution of ATE estimates over multiple independent trials. The mean of this distribution (4.91) is nearly identical to the true ATE (5.0), demonstrating that our estimator is \textbf{unbiased}. The right panel shows that the absolute estimation error converges towards zero as the number of subunits per unit ($m$) increases. This confirms the \textbf{consistency} of our estimator, a key property guaranteed by our Collapse Theorem ~\ref{thm:collapse_st}. These results provide strong evidence that our framework is correctly specified and theoretically sound (\textbf{answering RQ1}).

% --- Figure for Unbiasedness and Consistency ---
\begin{figure}[t]
\centering
\subfloat[Unbiasedness]{%
    \includegraphics[width=0.48\columnwidth]{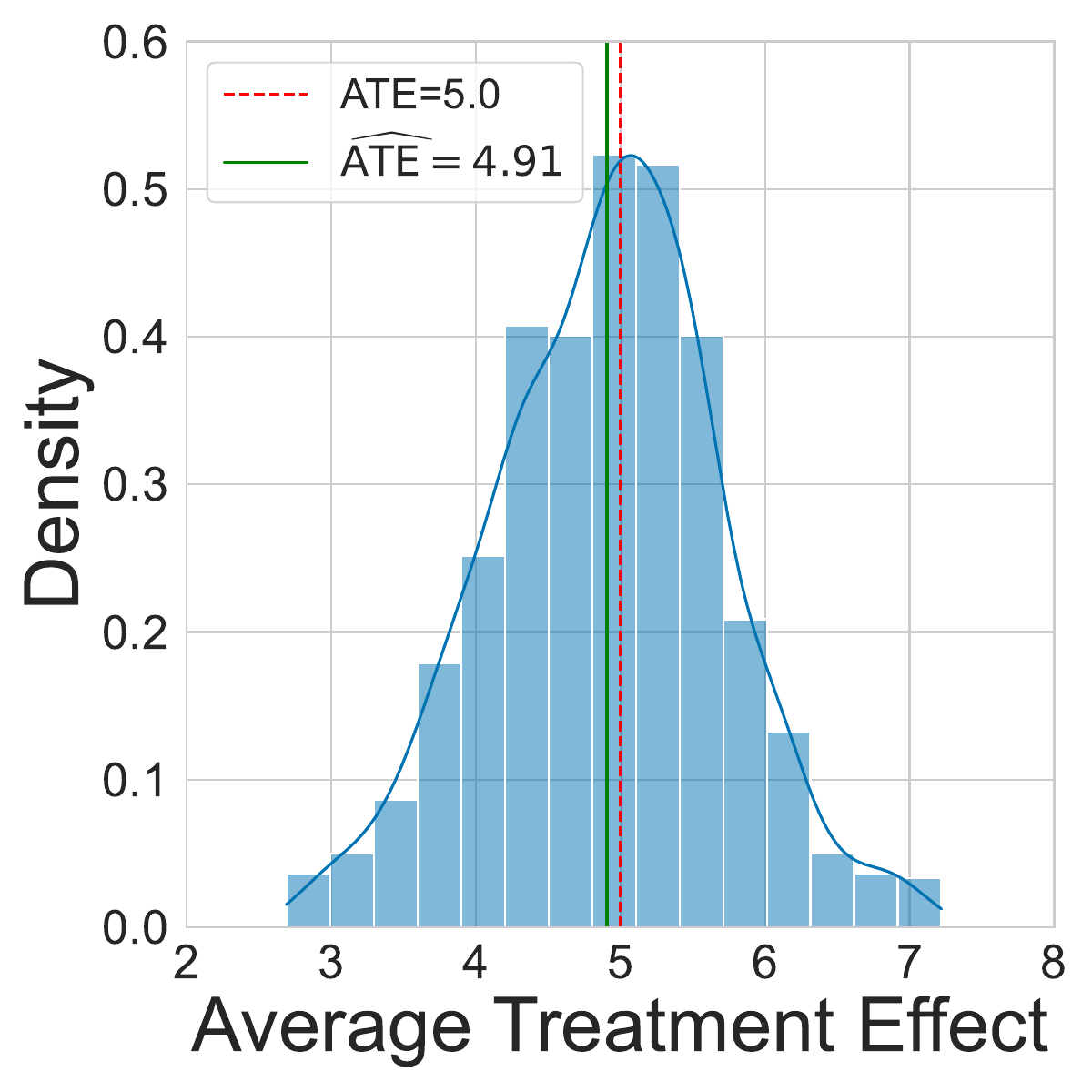}%
    \label{fig:unbiasedness}%
}
\hfill % Adds a small horizontal space between the figures
\subfloat[Consistency]{%
    \includegraphics[width=0.48\columnwidth]{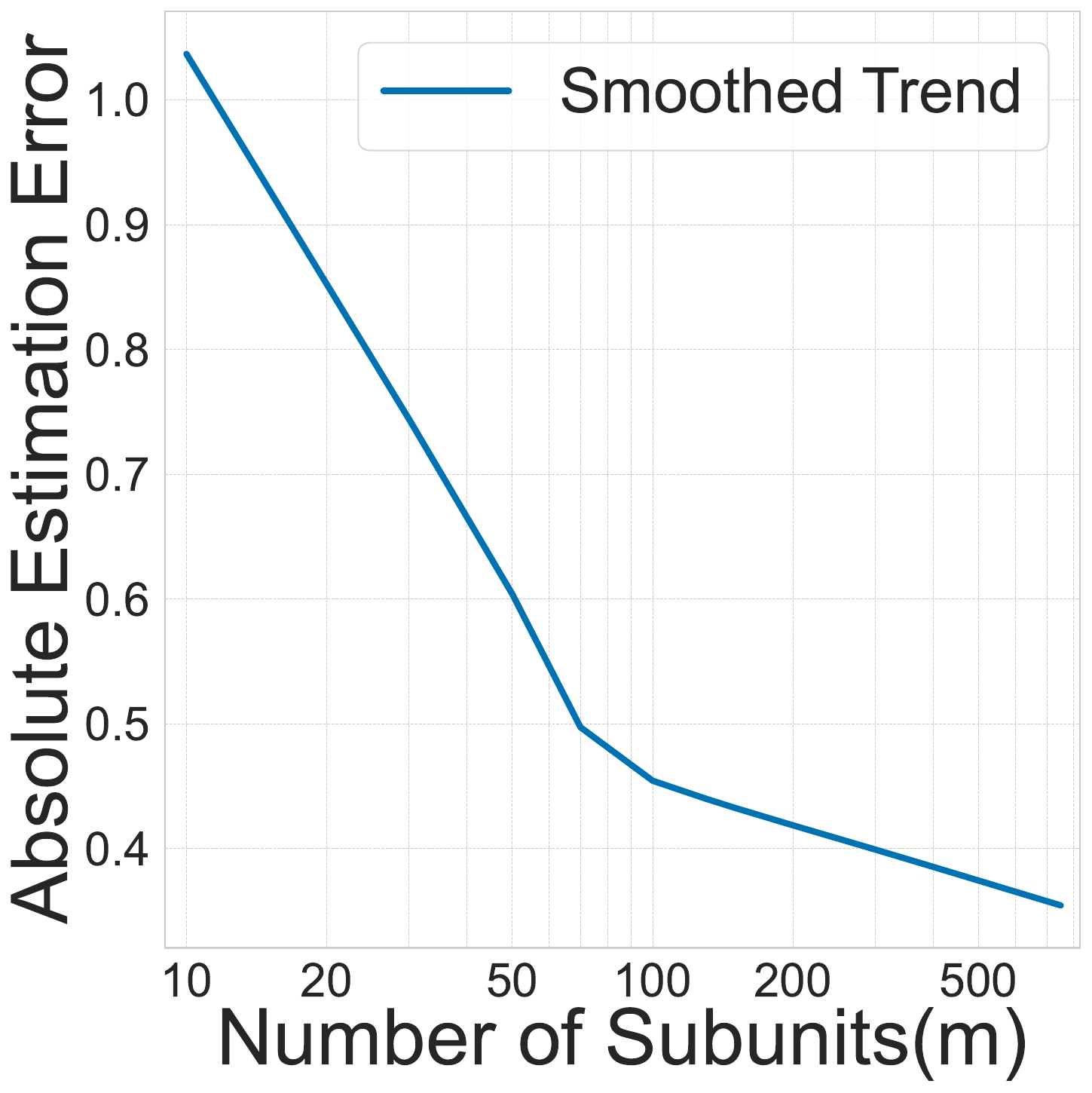}%
    \label{fig:consistency}%
}
\caption{Validation of ST-HCM estimator.}

\label{fig:bias_and_consis}
\end{figure}

\subsubsection{Superiority over Baselines}
Having established the correctness of our framework, we now conduct a rigorous stress test to evaluate its superiority over several strong baselines in the presence of unobserved confounding and spatial spillovers. To isolate the core causal identification capabilities of each method, we instantiate our framework and the hierarchical baselines using Linear Mixed Models (LMMs). This creates a challenging but well-understood linear setting where performance differences can be clearly attributed to the structural assumptions of each model.
We compare our ST-HCM (LMM) against two primary classes of baselines:
\begin{enumerate}
    \item \textbf{Aggregated Models}: A standard spatio-temporal LMM applied to data aggregated at the unit-time level, which ignores the hierarchical structure.
    \item \textbf{Non-Spatial Models}: A Temporal HCM that models the hierarchy but ignores spatial dependencies, equivalent to a panel data model with fixed effects.
\end{enumerate}

Table~\ref{tab:lmm_summary_compact}  presents the ATE absolute error for these models across a wide grid of confounding strengths ($\gamma$) and spatial spillover strengths ($\rho$). The results reveal two clear patterns.

\textbf{First, hierarchy is essential for handling confounding.} Across all levels of spatial spillover (all row groups), the error of the Aggregated model increases dramatically with the confounding strength. In contrast, both hierarchical models (T-HCM and ST-HCM) maintain low error, demonstrating their robustness to the unobserved unit-level confounder $U_i$. For instance, at $\rho=2.0$ and $\gamma=4.0$, the Aggregated model's error is 7.94, whereas the hierarchical models' errors are an order of magnitude smaller.

\textbf{Second, spatial modeling is crucial under spillover.} While the T-HCM performs well when spillover is mild ($\rho=0.0, 0.5$), its error systematically increases as $\rho$ grows. Our proposed ST-HCM, which explicitly models these spatial dependencies, consistently maintains the lowest error across nearly all settings. At $\gamma=4.0$, as $\rho$ increases from 0.0 to 2.0, the error of the T-HCM explodes from 0.11 to 2.35, while our ST-HCM's error remains stable at approximately 0.11.
These findings indicate that our ST-HCM framework, by correctly modeling both the hierarchical structure and spatio-temporal dependencies, provides substantially more accurate causal estimates than methods that ignore either dimension of the data's complexity (\textbf{answering RQ2}).

\begin{table}[t]
\centering
\begin{tabular}{@{}l@{\hspace{0.8em}}lccccc@{}}
\toprule
& & \multicolumn{5}{c}{\textbf{Confounding Strength ($\gamma$)}} \\
\cmidrule(l){3-7}
\textbf{$\rho$} & \textbf{Model} & \textbf{0.0} & \textbf{1.0} & \textbf{2.0} & \textbf{3.0} & \textbf{4.0} \\
\midrule
\multirow{3}{*}{0.0} & Aggregated & 0.173 & 1.909 & 2.886 & 3.009 & 3.086 \\
 & \textbf{T-HCM} & \textbf{0.054} & \textbf{0.070} & \textbf{0.084} & 0.126 & \textbf{0.114} \\
 & ST-HCM & 0.057 & 0.073 & 0.086 & \textbf{0.118} & 0.117 \\
\addlinespace
\multirow{3}{*}{0.5} & Aggregated & 0.164 & 1.854 & 3.038 & 3.387 & 3.785 \\
 & \textbf{T-HCM} & 0.097 & \textbf{0.068} & \textbf{0.072} & 0.128 & \textbf{0.080} \\
 & ST-HCM & \textbf{0.056} & 0.069 & 0.088 & \textbf{0.126} & 0.123 \\
\addlinespace
\multirow{3}{*}{1.0} & Aggregated & 0.189 & 2.261 & 3.624 & 4.013 & 4.260 \\
 & T-HCM & 0.132 & 0.108 & 0.161 & 0.251 & 0.224 \\
 & \textbf{ST-HCM} & \textbf{0.056} & \textbf{0.066} & \textbf{0.076} & \textbf{0.127} & \textbf{0.121} \\
\addlinespace
\multirow{3}{*}{1.5} & Aggregated & 0.211 & 3.420 & 5.051 & 5.950 & 6.734 \\
 & T-HCM & 0.174 & 0.135 & 0.335 & 0.507 & 0.606 \\
 & \textbf{ST-HCM} & \textbf{0.058} & \textbf{0.066} & \textbf{0.064} & \textbf{0.123} & \textbf{0.113} \\
\addlinespace
\multirow{3}{*}{2.0} & Aggregated & 0.228 & 3.962 & 5.620 & 6.819 & 7.942 \\
 & T-HCM & 0.329 & 0.271 & 0.894 & 1.320 & 2.354 \\
 & \textbf{ST-HCM} & \textbf{0.056} & \textbf{0.072} & \textbf{0.070} & \textbf{0.124} & \textbf{0.110} \\
\bottomrule
\end{tabular}
\caption{Mean ATE Absolute Error for different levels of confounding strength ($\gamma$) and spatial spillover ($\rho$). Both T-HCM (no spatial modeling) and ST-HCM (full model) are proposed hierarchical estimators. T-HCM is optimal when $\rho=0$, while ST-HCM excels when $\rho \geq 1$, demonstrating the importance of correctly specifying spatial dependencies.}
\label{tab:lmm_summary_compact}
\end{table}

\subsubsection{Robustness}
We assess the robustness of our ST-HCM (LMM) estimator by systematically violating two key assumptions: the time-invariance of the unobserved confounder and the causal ordering of spatial effects (Assumption~\ref{assump:spatial_order}). Our simulations are based on a challenging linear setting ($N=16, m=50, T=8$) with fixed baseline confounding ($\gamma=2.0$) and spatial spillover ($\rho=1.5$). See Appendix E for data generation details.

As illustrated in Figure~\ref{fig:time_space_ass_vio}, our proposed model demonstrates considerable robustness. When the time-invariance assumption is relaxed by introducing a temporal drift to the confounder (Figure~\ref{fig:time_invar_vio}), the error of our ST-HCM (LMM) increases mildly but remains significantly lower than that of the non-spatial T-HCM baseline. More impressively, when the spatial ordering assumption is violated by introducing simultaneous feedback loops (Figure~\ref{fig:spatial_order_vio}), the performance of our ST-HCM (LMM) remains nearly unaffected, maintaining a consistently low error and substantially outperforming both the T-HCM and Aggregated baselines across all violation intensities. This suggests our framework is a robust tool for practical applications where these assumptions may not strictly hold (\textbf{answering RQ3}).

\begin{figure}[t]
\centering
\subfloat[Time-Variant Confounding]{
    \includegraphics[width=0.46\columnwidth]{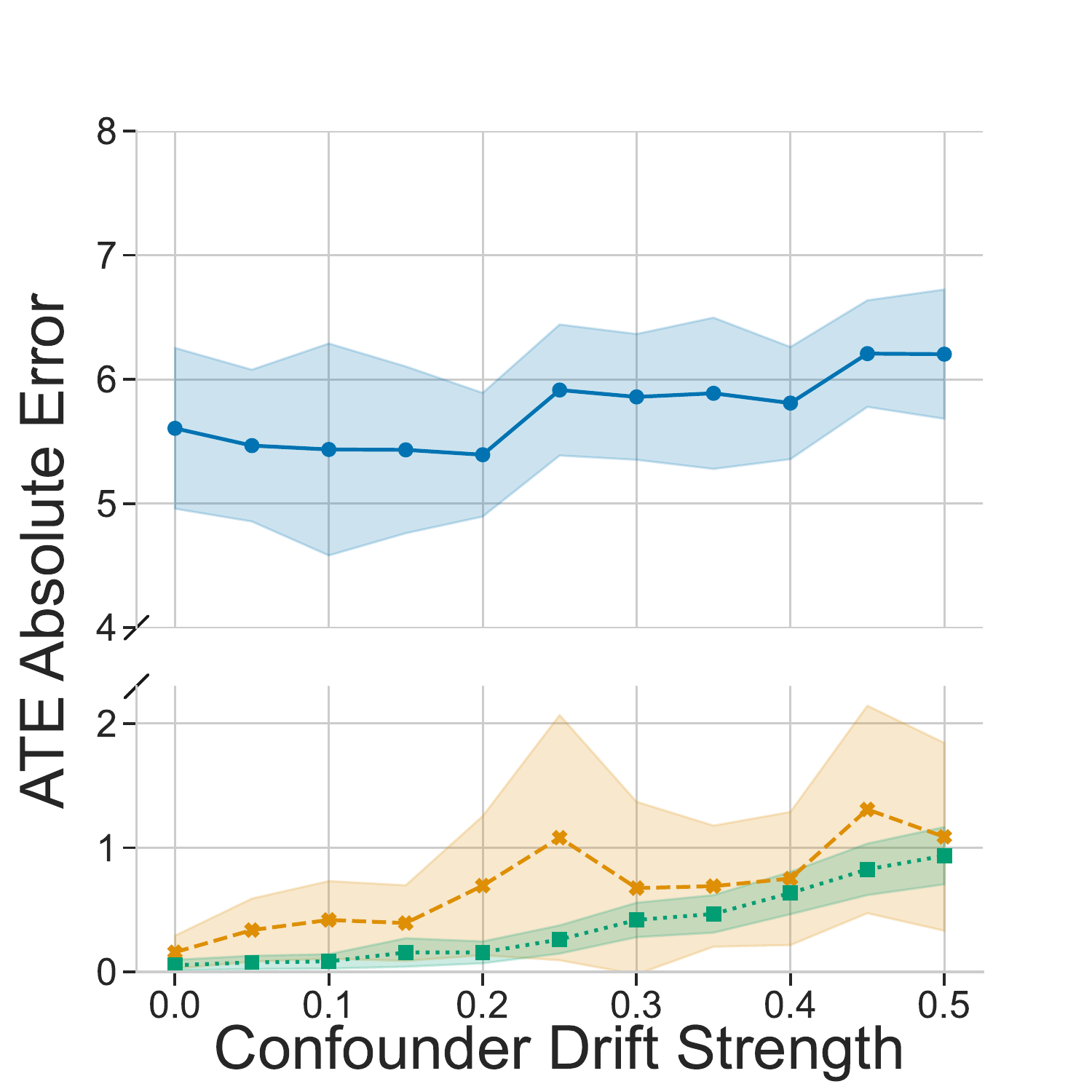}%
    \label{fig:time_invar_vio}%
}
\hfill % Adds a small horizontal space between the figures
\subfloat[Violation of Spatial Order]{%
    \includegraphics[width=0.46\columnwidth]{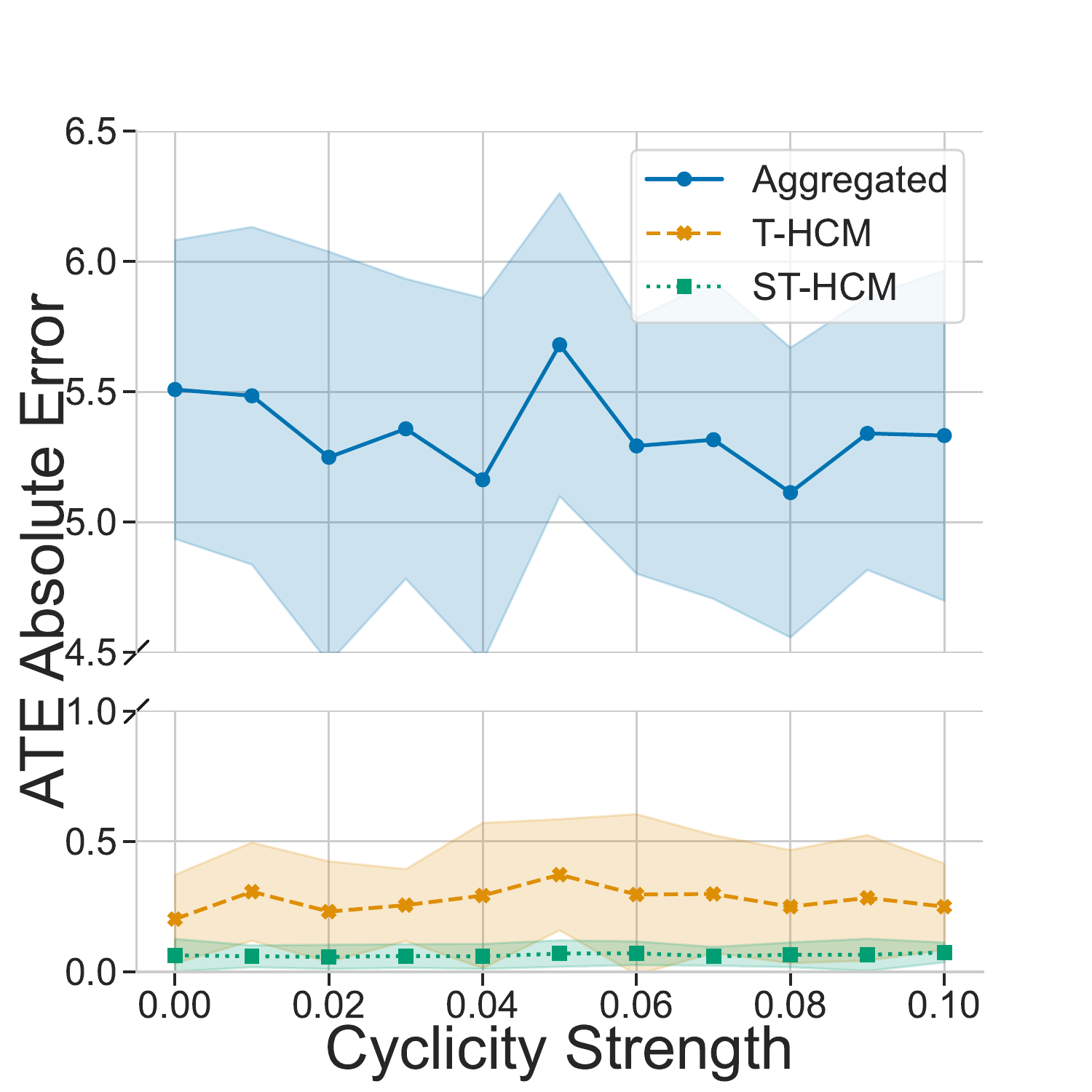}%
    \label{fig:spatial_order_vio}%
}
\caption{Robustness analysis of estimators.}
\label{fig:time_space_ass_vio}
\end{figure}

\subsection{Real-World Application: Urban Traffic}

We demonstrate the applicability of our framework in real-world using a Chicago traffic data set. Our analysis uses traffic speed data from 1,025 road segments (subunits) across 29 traffic regions (units) over a one-week period, comprising over 1.2 million observations and 1943 treatment events(Figure~\ref{fig:ate_real_world}(a)).

To capture the complex non-linearities inherent in traffic systems, we employ Gradient Boosting Machines (GBM) as the implementation for the unit-specific conditional mechanism $\mathcal{M}_i$. We compare our full ST-HCM against a T-HCM that ignores spatial effects, and an Aggregated model that ignores the hierarchy entirely.

Results summarized in Figure~\ref{fig:ate_real_world}(b) reveal that causal estimates are sensitive to structural assumptions. Progressively incorporating the hierarchical and then the spatial structure systematically attenuates the estimated ATE. Notably, this process also sharpens the posterior distribution, demonstrating that a more correctly specified model yields not only a less biased but also a more precise estimate by disentangling the treatment effect from unobserved confounders and spatial spillovers (\textbf{answering RQ4}).

\begin{figure}[t]
\centering
\subfloat[Chicago Layout]{%
    \includegraphics[width=0.3\columnwidth]{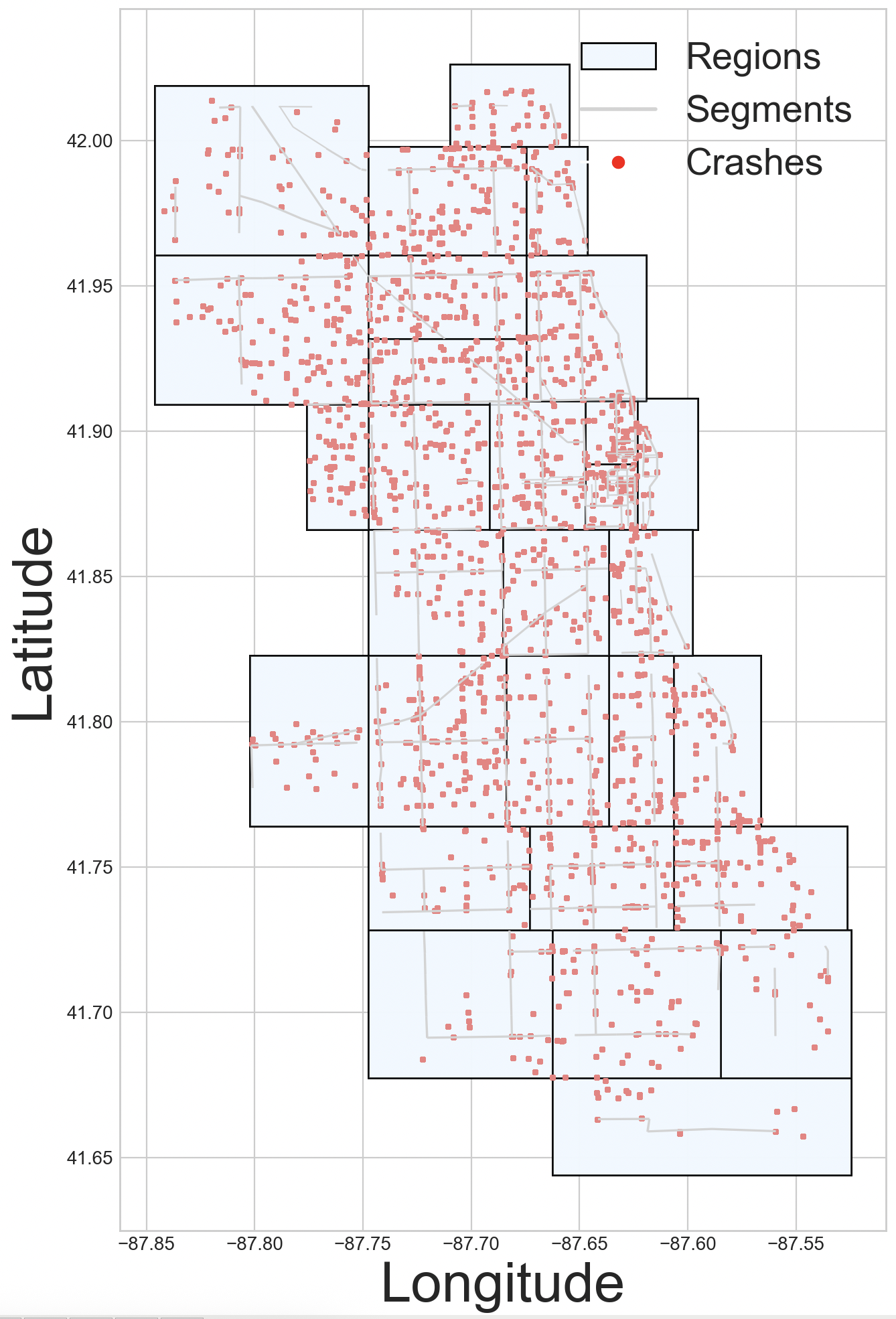}
    \label{fig:chicago}%
}
\hfill % Adds a small horizontal space between the figures
\subfloat[Estimated ATE Distributions]{%
    \includegraphics[width=0.54\columnwidth]{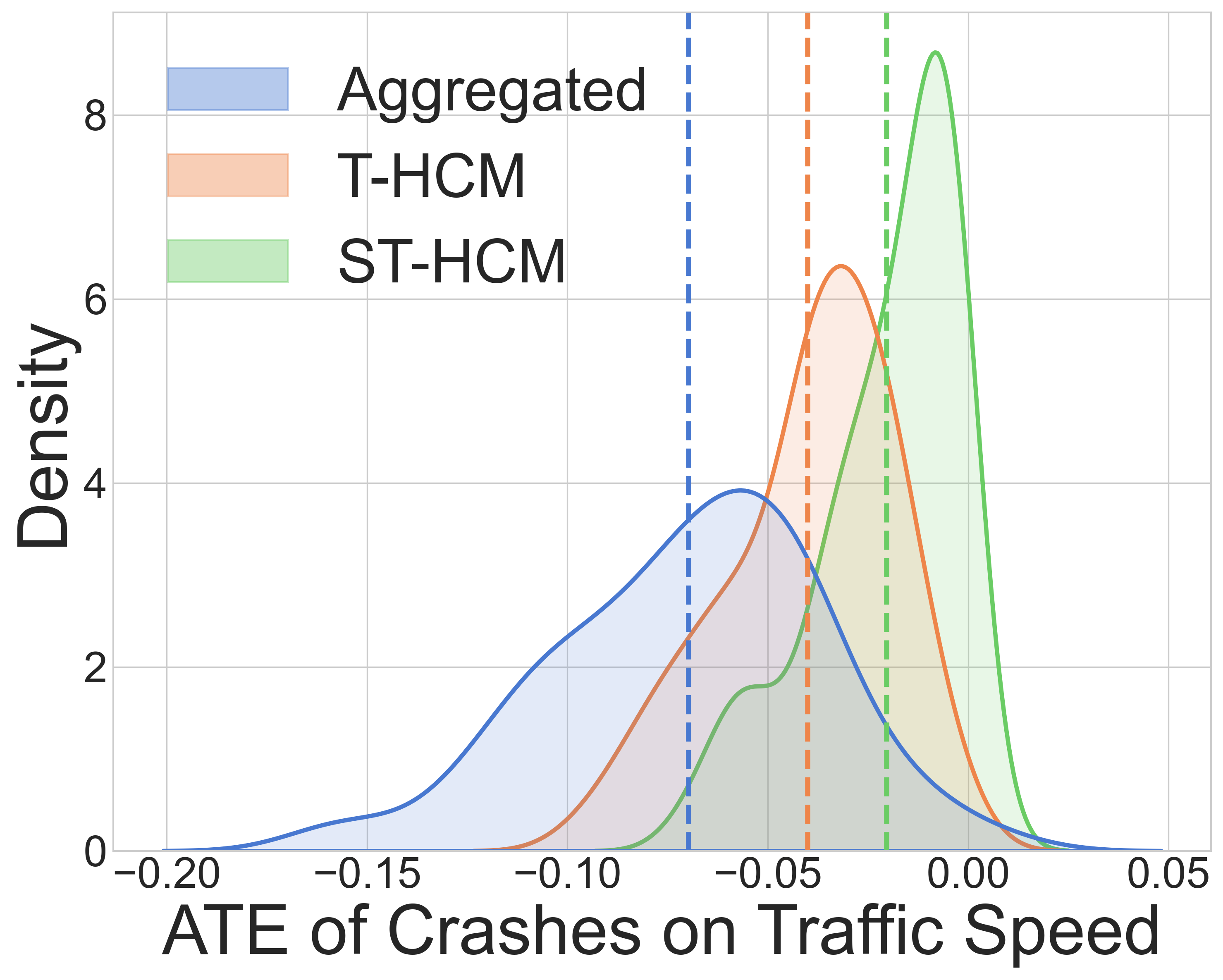}%
    \label{fig:real_world_ATE_Distri}%
}
\caption{Real-world application.}
\label{fig:ate_real_world}
\end{figure}

\section{Related Work}

\paragraph{Hierarchical causal models}
The hierarchical causal model framework ~\citep{weinstein2024hierarchicalcausalmodels}, exploits subunit variation to address unobserved unit-level confounding, generalizing fixed-effects and related econometric methods ~\citep{gelman2007data, wooldridge2010econometric}. However, they are designed for static i.i.d. settings and cannot capture temporal or spatial dependencies. Recent advances such as DML-panel methods with correlated random effects and dynamic multivariate panel models suggest possible extensions toward temporal settings ~\citep{10.1093/ectj/utaf011,tikka2024dynamiterpackagedynamic}. Our ST-HCMs extend HCMs by explicitly modeling temporal dynamics and spatial interactions while still addressing latent unit-level confounders.

\paragraph{Causal inference for spatio-temporal data}
Existing approaches fall into two categories: (1) causal discovery methods that learn dynamic causal graphs from observational data ~\citep{pamfil2020dynotearsstructurelearningtimeseries,zhao2023spatio,gong2024causal,wang2025discoveringlatentcausalgraphs} (2)  causal effect estimators that adapt frameworks like G-computation, marginal structural models, or propensity scores to dynamic or spatial settings, with recent extensions using recurrent or transformer architectures ~\citep{bica2020timeseriesdeconfounderestimating,melnychuk2022causaltransformerestimatingcounterfactual,bhattacharya2024causaleffectestimationnetwork,11017752,oprescu2025gstunetspatiotemporalcausalinference}. However, most assume no latent unit-level confounders, which is unrealistic in domains like environment or transportation. In contrast, ST-HCM address this gap by structurally incorporating unobserved, static confounders, enabling more credible causal inference.

\paragraph{Panel data and IV-based causal inference}
Panel methods such as fixed effects, difference-in-differences, and synthetic control estimate effects via within-unit variation or counterfactuals ~\citep{wooldridge2010econometric,abadie2010synthetic}, but their i.i.d. and linearity assumptions limit use in spatio-temporal contexts ~\citep{millimet2023fixed,10.1093/ectj/utae014}. IV-based approaches mitigate confounding, for example through DeepIV and semiparametric IVs ~\citep{pmlr-v70-hartford17a,10.1145/3690624.3709407,cui2025semiparametricbayesianmethodinstrumental}; and the recent Time‑dependent Instrumental Factor Model (TIFM) addresses time‑varying confounders with learned substitutes ~\citep{cheng2024instrumental}. Still, IV methods hinge on valid instruments, which are often scarce in practice. Our ST-HCM offers a flexible alternative by combining hierarchical modeling with graphical reasoning and spatio-temporal analysis.

\section{Conclusion}

% This paper introduces Spatio-Temporal Hierarchical Causal Models (ST-HCMs), a novel framework for causal inference in complex spatio-temporal systems. By extending hierarchical causal modeling to incorporate temporal dynamics and spatial interactions, ST-HCMs address the critical challenge of unobserved unit-level confounders. The Spatio-Temporal Collapse Theorem ensures model simplification, enabling robust causal effect identification. Experimental results on synthetic and real-world datasets confirm the framework's ability to deliver unbiased and consistent estimates, outperforming traditional non-hierarchical approaches.

In this paper, we introduced Spatio-Temporal Hierarchical Causal Models (ST-HCMs), a novel framework enabling reliable causal inference from nested spatio-temporal data. Our theoretical foundation rests on a Spatio-Temporal Collapse Theorem, guaranteeing that complex hierarchical models converge to simpler, tractable counterparts, providing rigorous causal identification. Extensive experiments verified this framework, demonstrating theoretical correctness, practical necessity over baselines ignoring hierarchy or spatial effects, and robustness to assumption violations.

\section{Acknowledgements}
We thank the reviewers for their valuable feedback.
This work was supported by Public Computing Cloud, Renmin University of China
and the fund for building world-class universities (disciplines) of Renmin University of China.

% \bibliography{aaai2026}

\begin{thebibliography}{51}
\providecommand{\natexlab}[1]{#1}

\bibitem[{Abadie, Diamond, and Hainmueller(2010)}]{abadie2010synthetic}
Abadie, A.; Diamond, A.; and Hainmueller, J. 2010.
\newblock Synthetic control methods for comparative case studies: Estimating the effect of California’s tobacco control program.
\newblock \emph{Journal of the American statistical Association}, 105(490): 493--505.

\bibitem[{Ali, Faruque, and Wang(2024)}]{ali2024estimating}
Ali, S.; Faruque, O.; and Wang, J. 2024.
\newblock Estimating direct and indirect causal effects of spatiotemporal interventions in presence of spatial interference.
\newblock In \emph{Joint European Conference on Machine Learning and Knowledge Discovery in Databases}, 213--230. Springer.

\bibitem[{Ali et~al.(2024)Ali, Hasan, Li, Faruque, Sampath, Huang, Gani, and Wang}]{ali2024causalityearthscience}
Ali, S.; Hasan, U.; Li, X.; Faruque, O.; Sampath, A.; Huang, Y.; Gani, M.~O.; and Wang, J. 2024.
\newblock Causality for Earth Science -- A Review on Time-series and Spatiotemporal Causality Methods.
\newblock arXiv:2404.05746.

\bibitem[{Angrist and Pischke(2009)}]{angrist2009mostly}
Angrist, J.~D.; and Pischke, J.-S. 2009.
\newblock \emph{Mostly harmless econometrics: An empiricist's companion}.
\newblock Princeton university press.

\bibitem[{Arkhangelsky and Imbens(2024)}]{10.1093/ectj/utae014}
Arkhangelsky, D.; and Imbens, G. 2024.
\newblock Causal models for longitudinal and panel data: a survey.
\newblock \emph{The Econometrics Journal}, 27(3): C1--C61.

\bibitem[{Aronow and Samii(2017)}]{10.1214/16-AOAS1005}
Aronow, P.~M.; and Samii, C. 2017.
\newblock {Estimating average causal effects under general interference, with application to a social network experiment}.
\newblock \emph{The Annals of Applied Statistics}, 11(4): 1912 -- 1947.

\bibitem[{Bhattacharya and Sen(2024)}]{bhattacharya2024causaleffectestimationnetwork}
Bhattacharya, S.; and Sen, S. 2024.
\newblock Causal effect estimation under network interference with mean-field methods.
\newblock arXiv:2407.19613.

\bibitem[{Bica, Alaa, and van~der Schaar(2020)}]{bica2020timeseriesdeconfounderestimating}
Bica, I.; Alaa, A.~M.; and van~der Schaar, M. 2020.
\newblock Time Series Deconfounder: Estimating Treatment Effects over Time in the Presence of Hidden Confounders.
\newblock arXiv:1902.00450.

\bibitem[{Billingsley(1995)}]{GVK164761632}
Billingsley, P. 1995.
\newblock \emph{Probability and measure}.
\newblock A Wiley-Interscience publication. New York [u.a.]: Wiley, 3. ed edition.
\newblock ISBN 0471007102.

\bibitem[{Cheng et~al.(2024)Cheng, Xu, Li, Liu, Liu, Gao, and Le}]{cheng2024instrumental}
Cheng, D.; Xu, Z.; Li, J.; Liu, L.; Liu, J.; Gao, W.; and Le, T.~D. 2024.
\newblock Instrumental variable estimation for causal inference in longitudinal data with time-dependent latent confounders.
\newblock In \emph{Proceedings of the AAAI Conference on Artificial Intelligence}, volume~38, 11480--11488.

\bibitem[{Clarke and Polselli(2025)}]{10.1093/ectj/utaf011}
Clarke, P.~S.; and Polselli, A. 2025.
\newblock Double machine learning for static panel models with fixed effects.
\newblock \emph{The Econometrics Journal}, utaf011.

\bibitem[{Cover and Thomas(1991)}]{m1991elements}
Cover, T.~M.; and Thomas, J.~A. 1991.
\newblock \emph{Elements of information theory}.
\newblock Wiley series in telecommunications. New York, NY [u.a.]: Wiley.

\bibitem[{Cressie and Wikle(2011)}]{creswik11}
Cressie, N.; and Wikle, C.~K. 2011.
\newblock \emph{Statistics for {S}patio-{T}emporal {D}ata}.
\newblock Wiley.

\bibitem[{Cui et~al.(2025)Cui, Lu, Zhou, Zhou, and Li}]{cui2025semiparametricbayesianmethodinstrumental}
Cui, E.~H.; Lu, X.; Zhou, J.; Zhou, H.; and Li, G. 2025.
\newblock A Semiparametric Bayesian Method for Instrumental Variable Analysis with Partly Interval-Censored Time-to-Event Outcome.
\newblock arXiv:2501.14837.

\bibitem[{Dong et~al.(2024)Dong, Wang, Bose, Ng, Zhang, and Zhang}]{dong2024causallyawarespatiotemporalmultigraphconvolution}
Dong, P.; Wang, X.-L.; Bose, I.; Ng, K. K.~H.; Zhang, X.; and Zhang, X. 2024.
\newblock Causally-Aware Spatio-Temporal Multi-Graph Convolution Network for Accurate and Reliable Traffic Prediction.
\newblock arXiv:2408.13293.

\bibitem[{Dong, Zhang, and Zhang(2025)}]{DONG2025104244}
Dong, P.; Zhang, X.; and Zhang, X. 2025.
\newblock A data-driven approach for spatio-temporal causal analysis in large-scale urban traffic networks.
\newblock \emph{Transportation Research Part E: Logistics and Transportation Review}, 202: 104244.

\bibitem[{Dudley(1999)}]{Dudley_1999}
Dudley, R.~M. 1999.
\newblock \emph{Uniform Central Limit Theorems}.
\newblock Cambridge Studies in Advanced Mathematics. Cambridge University Press.

\bibitem[{Duvenaud(2014)}]{duvenaud_2014}
Duvenaud, D. 2014.
\newblock \emph{Automatic model construction with Gaussian processes}.
\newblock Ph.D. thesis, Apollo - University of Cambridge Repository.

\bibitem[{Faraway(2006)}]{Faraway2006}
Faraway, J.~J. 2006.
\newblock \emph{Extending Linear Models with {R}: Generalized Linear, Mixed Effects and Nonparametric Regression Models}.
\newblock Boca Raton, FL: Chapman \& Hall/CRC.
\newblock ISBN 1-584-88424-X.

\bibitem[{Folland(2013)}]{folland2013real}
Folland, G. 2013.
\newblock \emph{Real Analysis: Modern Techniques and Their Applications}.
\newblock Pure and Applied Mathematics: A Wiley Series of Texts, Monographs and Tracts. Wiley.
\newblock ISBN 9781118626399.

\bibitem[{Gelfand et~al.(2010)Gelfand, Fuentes, Guttorp, and Diggle}]{gelfand2010handbook}
Gelfand, A.; Fuentes, M.; Guttorp, P.; and Diggle, P. 2010.
\newblock \emph{Handbook of Spatial Statistics}.
\newblock Chapman \& Hall/CRC Handbooks of Modern Statistical Methods. Taylor \& Francis.
\newblock ISBN 9781420072877.

\bibitem[{Gelman and Hill(2007)}]{gelman2007data}
Gelman, A.; and Hill, J. 2007.
\newblock \emph{Data analysis using regression and multilevel/hierarchical models}.
\newblock Cambridge university press.

\bibitem[{Gong et~al.(2024)Gong, Zhang, Yao, Bi, Li, and Xu}]{gong2024causal}
Gong, C.; Zhang, C.; Yao, D.; Bi, J.; Li, W.; and Xu, Y. 2024.
\newblock Causal discovery from temporal data: An overview and new perspectives.
\newblock \emph{ACM Computing Surveys}, 57(4): 1--38.

\bibitem[{Hartford et~al.(2017)Hartford, Lewis, Leyton-Brown, and Taddy}]{pmlr-v70-hartford17a}
Hartford, J.; Lewis, G.; Leyton-Brown, K.; and Taddy, M. 2017.
\newblock Deep {IV}: A Flexible Approach for Counterfactual Prediction.
\newblock In Precup, D.; and Teh, Y.~W., eds., \emph{Proceedings of the 34th International Conference on Machine Learning}, volume~70 of \emph{Proceedings of Machine Learning Research}, 1414--1423. PMLR.

\bibitem[{Hern{\'a}n and Robins(2006)}]{hernan2006estimating}
Hern{\'a}n, M.~A.; and Robins, J.~M. 2006.
\newblock Estimating causal effects from epidemiological data.
\newblock \emph{Journal of Epidemiology \& Community Health}, 60(7): 578--586.

\bibitem[{Hern{\'a}n and Robins(2010)}]{hernan2010causal}
Hern{\'a}n, M.~A.; and Robins, J.~M. 2010.
\newblock Causal inference.

\bibitem[{Jensen, Burroni, and Rattigan(2020)}]{pmlr-v115-jensen20a}
Jensen, D.; Burroni, J.; and Rattigan, M. 2020.
\newblock Object Conditioning for Causal Inference.
\newblock In Adams, R.~P.; and Gogate, V., eds., \emph{Proceedings of The 35th Uncertainty in Artificial Intelligence Conference}, volume 115 of \emph{Proceedings of Machine Learning Research}, 1072--1082. PMLR.

\bibitem[{Kechris(2012)}]{kechris2012classical}
Kechris, A. 2012.
\newblock \emph{Classical descriptive set theory}, volume 156.
\newblock Springer Science \& Business Media.

\bibitem[{Li, Li, and Zhou(2025)}]{Li_Li_Zhou_2025}
Li, J.; Li, X.; and Zhou, X. 2025.
\newblock FAP-CD: Fairness-Driven Age-Friendly Community Planning via Conditional Diffusion Generation.
\newblock \emph{Proceedings of the AAAI Conference on Artificial Intelligence}, 39(27): 28168--28176.

\bibitem[{Li et~al.(2025)Li, Li, Li, Xu, Lin, and Jiang}]{11017752}
Li, S.; Li, H.; Li, X.; Xu, Y.; Lin, Z.; and Jiang, H. 2025.
\newblock Causal Intervention Is What Large Language Models Need for Spatio-Temporal Forecasting.
\newblock \emph{IEEE Transactions on Cybernetics}, 55(8): 3825--3837.

\bibitem[{Martinussen et~al.(2017)Martinussen, Vansteelandt, Tchetgen~Tchetgen, and Zucker}]{martinussen2017instrumental}
Martinussen, T.; Vansteelandt, S.; Tchetgen~Tchetgen, E.~J.; and Zucker, D.~M. 2017.
\newblock Instrumental variables estimation of exposure effects on a time-to-event endpoint using structural cumulative survival models.
\newblock \emph{Biometrics}, 73(4): 1140--1149.

\bibitem[{Melnychuk, Frauen, and Feuerriegel(2022)}]{melnychuk2022causaltransformerestimatingcounterfactual}
Melnychuk, V.; Frauen, D.; and Feuerriegel, S. 2022.
\newblock Causal Transformer for Estimating Counterfactual Outcomes.
\newblock arXiv:2204.07258.

\bibitem[{Millimet and Bellemare(2023)}]{millimet2023fixed}
Millimet, D.; and Bellemare, M.~F. 2023.
\newblock Fixed effects and causal inference.

\bibitem[{Natekin and Knoll(2013)}]{10.3389/fnbot.2013.00021}
Natekin, A.; and Knoll, A. 2013.
\newblock Gradient boosting machines, a tutorial.
\newblock \emph{Frontiers in Neurorobotics}, Volume 7 - 2013.

\bibitem[{Newey and Powell(2003)}]{newey2003instrumental}
Newey, W.~K.; and Powell, J.~L. 2003.
\newblock Instrumental variable estimation of nonparametric models.
\newblock \emph{Econometrica}, 71(5): 1565--1578.

\bibitem[{Oprescu et~al.(2025)Oprescu, Park, Luo, Yoo, and Kallus}]{oprescu2025gstunetspatiotemporalcausalinference}
Oprescu, M.; Park, D.~K.; Luo, X.; Yoo, S.; and Kallus, N. 2025.
\newblock GST-UNet: Spatiotemporal Causal Inference with Time-Varying Confounders.
\newblock arXiv:2502.05295.

\bibitem[{Pamfil et~al.(2020)Pamfil, Sriwattanaworachai, Desai, Pilgerstorfer, Beaumont, Georgatzis, and Aragam}]{pamfil2020dynotearsstructurelearningtimeseries}
Pamfil, R.; Sriwattanaworachai, N.; Desai, S.; Pilgerstorfer, P.; Beaumont, P.; Georgatzis, K.; and Aragam, B. 2020.
\newblock DYNOTEARS: Structure Learning from Time-Series Data.
\newblock arXiv:2002.00498.

\bibitem[{Pearl(2009)}]{10.1214/09-SS057}
Pearl, J. 2009.
\newblock {Causal inference in statistics: An overview}.
\newblock \emph{Statistics Surveys}, 3(none): 96 -- 146.

\bibitem[{Rasmussen and Williams(2006)}]{rasmussen:williams:2006}
Rasmussen, C.~E.; and Williams, C. K.~I. 2006.
\newblock \emph{Gaussian Processes for Machine Learning}.
\newblock MIT Press.

\bibitem[{Reich et~al.(2021)Reich, Yang, Guan, Giffin, Miller, and Rappold}]{reich2021review}
Reich, B.~J.; Yang, S.; Guan, Y.; Giffin, A.~B.; Miller, M.~J.; and Rappold, A. 2021.
\newblock A review of spatial causal inference methods for environmental and epidemiological applications.
\newblock \emph{International Statistical Review}, 89(3): 605--634.

\bibitem[{Robins(1986)}]{ROBINS19861393}
Robins, J. 1986.
\newblock A new approach to causal inference in mortality studies with a sustained exposure period—application to control of the healthy worker survivor effect.
\newblock \emph{Mathematical Modelling}, 7(9): 1393--1512.

\bibitem[{Tikka and Helske(2024)}]{tikka2024dynamiterpackagedynamic}
Tikka, S.; and Helske, J. 2024.
\newblock dynamite: An R Package for Dynamic Multivariate Panel Models.
\newblock arXiv:2302.01607.

\bibitem[{Wang et~al.(2025)Wang, Varambally, Watson-Parris, Ma, and Yu}]{wang2025discoveringlatentcausalgraphs}
Wang, K.; Varambally, S.; Watson-Parris, D.; Ma, Y.-A.; and Yu, R. 2025.
\newblock Discovering Latent Causal Graphs from Spatiotemporal Data.
\newblock arXiv:2411.05331.

\bibitem[{Wang et~al.(2018)Wang, Zhou, Mascolo, Noulas, Xie, and Liu}]{wang_zhou_mascolo_noulas_xie_liu_2018}
Wang, Y.; Zhou, X.; Mascolo, C.; Noulas, A.; Xie, X.; and Liu, Q. 2018.
\newblock Predicting the Spatio-Temporal Evolution of Chronic Diseases in Population with Human Mobility Data.

\bibitem[{Weinstein and Blei(2024)}]{weinstein2024hierarchicalcausalmodels}
Weinstein, E.~N.; and Blei, D.~M. 2024.
\newblock Hierarchical Causal Models.
\newblock arXiv:2401.05330.

\bibitem[{Wooldridge(2010)}]{wooldridge2010econometric}
Wooldridge, J.~M. 2010.
\newblock \emph{Econometric analysis of cross section and panel data}.
\newblock MIT press.

\bibitem[{Xie et~al.(2025{\natexlab{a}})Xie, Hong, Lyu, Wang, Wang, and Zhang}]{xie-etal-2025-coalign}
Xie, Z.; Hong, Z.; Lyu, W.; Wang, H.; Wang, G.; and Zhang, D. 2025{\natexlab{a}}.
\newblock {C}o{A}lign: Uncertainty Calibration of {LLM} for Geospatial Repartition.
\newblock In Rehm, G.; and Li, Y., eds., \emph{Proceedings of the 63rd Annual Meeting of the Association for Computational Linguistics (Volume 6: Industry Track)}, 254--263. Vienna, Austria: Association for Computational Linguistics.
\newblock ISBN 979-8-89176-288-6.

\bibitem[{Xie et~al.(2025{\natexlab{b}})Xie, Lyu, Song, Wang, Yang, Liu, He, Zhang, and Wang}]{10.1145/3690624.3709407}
Xie, Z.; Lyu, W.; Song, Y.; Wang, H.; Yang, G.; Liu, Y.; He, T.; Zhang, D.; and Wang, G. 2025{\natexlab{b}}.
\newblock Scalable Area Difficulty Assessment with Knowledge-enhanced AI for Nationwide Logistics Systems.
\newblock In \emph{Proceedings of the 31st ACM SIGKDD Conference on Knowledge Discovery and Data Mining V.1}, KDD '25, 2713–2724. New York, NY, USA: Association for Computing Machinery.
\newblock ISBN 9798400712456.

\bibitem[{Xu and Zhou(2024)}]{xu2024cgapurbanregionrepresentation}
Xu, Z.; and Zhou, X. 2024.
\newblock CGAP: Urban Region Representation Learning with Coarsened Graph Attention Pooling.
\newblock arXiv:2407.02074.

\bibitem[{Zhang and Ning(2023)}]{zhang2023spatiotemporal}
Zhang, W.; and Ning, K. 2023.
\newblock Spatiotemporal heterogeneities in the causal effects of mobility intervention policies during the COVID-19 outbreak: A spatially interrupted time-series (SITS) analysis.
\newblock \emph{Annals of the American Association of Geographers}, 113(5): 1112--1134.

\bibitem[{Zhao et~al.(2023)Zhao, Zhang, Wang, and Zhou}]{zhao2023spatio}
Zhao, W.; Zhang, S.; Wang, B.; and Zhou, B. 2023.
\newblock Spatio-temporal causal graph attention network for traffic flow prediction in intelligent transportation systems.
\newblock \emph{PeerJ Computer Science}, 9: e1484.

\end{thebibliography}

\newpage
\appendix

\section{Proof of Theorem \ref{thm:collapse_t}}  % : Convergence of T-HCMs
\label{app:proof_t_hcm_collapse}

This section provides the complete proof for the convergence of a Temporal Hierarchical Causal Model (T-HCGM) to its corresponding Dynamic Collapsed Model (DCM).

\begin{proof}[Proof]
The proof proceeds by mathematical induction on the time horizon $T$.
Let $P_{col,t}$ and $P_{m,t}^{marg}$ denote the joint distributions of the macro-state history $(\bmS_1, \dots, \bmS_t)$.

\textbf{Base Case : } 
For $T=1$, we need to show that $\lim_{m\to\infty} \KL(P_{col,1} \ || \ P_{m,1}^{marg}) = 0$.
The macro-state at $t=1$ is $\bmS_1 = (X_1^U, \bmQ_1)$. Using the chain rule for KL-divergence \citep{m1991elements}, we have:
\begin{align}
    \KL(P_{col,1} \ || \ P_{m,1}^{marg}) =  \KL(P_{col}(\bmQ_1) \ || \ P_m^{marg}(\bmQ_1)) \notag \\
     + \E_{P_{col}(\bmQ_1)}[\KL(P_{col}(X_1^U|\bmQ_1) \ || \ P_m^{marg}(X_1^U|\bmQ_1))]
\end{align}

The first term, $\KL(P_{col}(\bmQ_1) \ || \ P_m^{marg}(\bmQ_1))$, is identically zero. This is because the generation of $\bmQ_1$ is, by definition, a unit-level process that does not depend on any subunit realizations, and the DCM is constructed to preserve this mechanism exactly. Thus, $P_{col}(\bmQ_1) = P_m^{marg}(\bmQ_1)$.

For the second term, we consider the inner KL-divergence, $\KL(P_{col}(X_1^U|\bmQ_1) \ || \ P_m^{marg}(X_1^U|\bmQ_1))$. The distribution $P_{col}(X_1^U|\bmQ_1)$ is determined by the true (but empty) subunit history measure, $\calP_{<1}$, while $P_m^{marg}(X_1^U|\bmQ_1)$ is determined by the empirical measure from $m$ subunits, $\hat{\calP}_{m,<1}$. By Assumption \ref{assump:mech_conv_t} (applied to an empty history), this KL-divergence converges to zero as $m \to \infty$. Since the integrand converges to zero for every $\bmQ_1$ and is non-negative, the expectation also converges to zero by the Dominated Convergence Theorem. Thus, the base case holds.

\textbf{Inductive Step:}
Inductive Hypothesis: Assume that for a time horizon $t-1$, the theorem holds:

\begin{align}
    \lim_{m\to\infty} \KL(P_{col,t-1} \ || \ P_{m,t-1}^{marg}) = 0 
\end{align}

We must prove that the theorem also holds for the time horizon $t$. We apply the chain rule for KL-divergence to the joint distribution up to time $t$:
\begin{align}
    \KL(P_{col,t} \ || \ P_{m,t}^{marg}) =  \KL(P_{col,t-1} \ || \ P_{m,t-1}^{marg})  \label{eq:term1}   \\
     + \E_{P_{col,t-1}} \left[ \KL\left( P_{col}(\bmS_t | \bmS_{<t}) \ || \ P_{m}^{marg}(\bmS_t | \bmS_{<t}) \right) \right] \label{eq:term2}
\end{align}

By the Inductive Hypothesis, Term \eqref{eq:term1} converges to zero as $m \to \infty$. The proof thus reduces to showing that Term \eqref{eq:term2}, the expected KL-divergence of the one-step transition kernels, also converges to zero. We will establish this by showing that the integrand converges to zero pointwise for any given macro-history $\bmS_{<t}$.

Let us analyze the inner KL-divergence by decomposing it for the components of $\bmS_t = (X_t^U, \bmQ_t)$:
% \begin{align}
%     \KL\left( P_{col}(\bmS_t | \bmS_{<t}) \ || \ P_{m}^{marg}(\bmS_t | \bmS_{<t}) \right) \\
%     =  \underbrace{\KL(P_{col}(\bmQ_t|\bmS_{<t}) \ || \ P_m^{marg}(\bmQ_t|\bmS_{<t}))}_{\text{Part A}} \nonumber \\
%      + \underbrace{\E_{P_{col}(\bmQ_t|\bmS_{<t})}\left[\KL(P_{col}(X_t^U|\bmQ_t, \bmS_{<t}) 
%      || \ P_m^{marg}(X_t^U|\bmQ_t, \bmS_{<t}))\right]}_{\text{Part B}} \label{eq:inner_kl_decomp}
% \end{align}
\begin{align}
    \KL\left( P_{col}(\bmS_t | \bmS_{<t}) \ || \ P_{m}^{marg}(\bmS_t | \bmS_{<t}) \right) \notag \\
    =  \KL(P_{col}(\bmQ_t|\bmS_{<t}) \ || \ P_m^{marg}(\bmQ_t|\bmS_{<t})) \notag \\
     + \E_{P_{col}(\bmQ_t|\bmS_{<t})}[\KL(P_{col}(X_t^U|\bmQ_t, \bmS_{<t}) \notag \\
     || \ P_m^{marg}(X_t^U|\bmQ_t, \bmS_{<t}))]\label{eq:inner_kl_decomp} \notag\\
    = \text{Part A} + \text{Part B}
\end{align}

\textbf{Part A} is identically zero because the generating mechanism for the Q-variables, $p(\bm{q}_t | U_i, \bm{S}_{i,<t})$, is a unit-level process preserved exactly in the DCM, making the two conditional distributions identical.

For \textbf{Part B}, we focus on the inner KL-divergence term inside the expectation. The two conditional distributions for $X_t^U$ differ only in their dependence on the measure over the subunit history space: $P_{col}(X_t^U|\bmQ_t, \bmS_{<t})$ depends on the true measure $\calP_{<t}$, while $P_m^{marg}(X_t^U|\bmQ_t, \bmS_{<t})$ depends on the empirical measure $\hat{\calP}_{m,<t}$. The convergence of this KL term to zero follows from a continuous mapping argument. 

By Assumption A2,  the class of subunit histories is a Glivenko-Cantelli class. This guarantees the uniform convergence of the empirical measure to the true measure (a result from empirical process theory, see, e.g., \citep{Dudley_1999}), i.e., $\hat{\calP}_{m,<t} \to \calP_{<t}$ almost surely in a suitable metric as $m \to \infty$. 
By Assumption A1, the mapping from this measure to the output distribution of $X_t^U$ is continuous in the sense of KL-divergence.
The Continuous Mapping Theorem \citep{GVK164761632} states that a continuous function of a convergent sequence of random variables converges to the function of the limit. Applying this here, since the input measures converge and the mapping is continuous, the output distributions must also converge. The mode of convergence guaranteed by Assumption A1 is precisely what we need: the KL-divergence between the output distributions converges to zero.
Since the integrand in Part B converges to zero for every $\bmQ_t$, and is non-negative and appropriately bounded, the entire expectation in \textbf{Part B converges to zero} by the Dominated Convergence Theorem \citep{folland2013real}.

% By Assumption A2, the empirical measure converges to the true measure, $\hat{\calP}_{m,<t} \to \calP_{<t}$, almost surely. By Assumption A1, the mapping from this measure to the output distribution of $X_t^U$ is continuous in the sense of KL-divergence. The Continuous Mapping Theorem then implies that the KL-divergence of the output distributions converges to zero. Since the integrand in Part B converges to zero pointwise and is non-negative, the entire expectation converges to zero by the Dominated Convergence Theorem.

Since both Part A and Part B are zero in the limit, the KL-divergence of the one-step transition kernel converges to zero. Consequently, Term \eqref{eq:term2} converges to zero. This completes the inductive step.

By the principle of mathematical induction, the theorem holds for any finite time horizon $T$.

\end{proof}

\section{Proof of Theorem \ref{thm:collapse_st}}  % : Convergence of ST-HCMs
\label{app:proof_st_hcm_collapse}

This section provides the proof for the convergence of a Spatio-Temporal Hierarchical Causal Model (ST-HCGM) to its corresponding Spatio-Temporal Dynamic Collapsed Model (ST-DCM).

\begin{proof}
The core of the proof is to construct a ``Super-Unit" model, $\calM_{super}$, that is mathematically equivalent to the ST-HCGM but formally satisfies the definition of a T-HCGM. We will then show that the convergence of $\calM_{super}$ guaranteed by Theorem \ref{thm:collapse_t} implies the convergence stated in Theorem \ref{thm:collapse_st}.

\textbf{Step 1: Construction of the Super-Unit Model}

We define a single Super-Unit by aggregating all $N$ spatial units. The state variables and mechanisms of this Super-Unit are defined as concatenations of the corresponding elements from the individual spatial units.

\begin{itemize}
    \item \textbf{Super-Unit State Variables:}
    \begin{itemize}
        \item Static Exogenous Variable: $$U_{super} \coloneqq (U_1, U_2, \dots, U_N).$$
        \item Unit-Level Variables at time $t$: 
        $$X_{super,t}^U \coloneqq (X_{1,t}^U, X_{2,t}^U, \dots, X_{N,t}^U).$$
        \item Q-Variables at time $t$: 
        $$\bmQ_{super,t} \coloneqq (\bmQ_{1,t}, \bmQ_{2,t}, \dots, \bmQ_{N,t}).$$
        \item Macro-State at time $t$: 
        $$\bmS_{super,t} \coloneqq (X_{super,t}^U, \bmQ_{super,t}).$$
    \end{itemize}
    \item \textbf{Super-Unit Subunits:} The set of subunits for the Super-Unit is the union of all subunits from all spatial units, totaling $N \times m$. The mechanism for each unit-level variable $X_{i,t}^w$ within the Super-Unit's state depends on the empirical measure constructed from this entire collection of subunits.
    
    \item \textbf{Super-Unit Mechanism:} The mechanism $\mathbf{f}_{super}$ for the Super-Unit is a vector-valued function where the $i$-th component is the mechanism $f_i$ from the original ST-HCGM. The evolution of the Super-Unit's state is thus described by:
    \begin{align}
        \bmS_{super,t} = \mathbf{f}_{super}(\bmS_{super,<t}, U_{super}, \notag\\
        \text{Noise}_{super,t}, \{\text{Subunit Info}\}_{super})
    \end{align}
\end{itemize}

\textbf{Step 2: Verify $\calM_{super}$ is a Valid T-HCGM}

We must verify that $\calM_{super}$ adheres to the definition of a T-HCGM. The most critical requirement is that its underlying causal graph, when unrolled over time, must be a Directed Acyclic Graph (DAG). The structure of $\calM_{super}$ is inherently acyclic across time steps, as any state at time $t$ can only depend on states at times $t' \le t$. The crucial step is to establish acyclicity within a single time step $t$. The dependencies among the components of the state vector $\bmS_{super,t} = (\bmS_{1,t}, \dots, \bmS_{N,t})$ are governed by the spatial interactions of the original ST-HCGM. Assumption ~\ref{assump:spatial_order} (Spatial Causal Ordering) imposes a strict, global ordering on these contemporaneous interactions. Specifically, any component $\bmS_{i,t}$ can be influenced by another component $\bmS_{j,t}$ only if $j < i$. This enforces a directed, acyclic structure on the dependencies within any given time slice. Since the graph is acyclic both across and within time steps, the unrolled graph for $\calM_{super}$ is a DAG. The other definitional aspects of a T-HCGM, such as the presence of a static confounder and time-varying noise, are satisfied by construction. Thus, $\calM_{super}$ is a valid T-HCGM.

\textbf{Step 3: Verify $\calM_{super}$ Satisfies Premises of Theorem 1}

We now verify that $\calM_{super}$ satisfies the assumptions required by the temporal collapsing theorem (Theorem \ref{thm:collapse_t}). The state space of the Super-Unit is a finite Cartesian product of Polish spaces, which is itself a Polish space \cite{kechris2012classical}, satisfying the regularity condition. The mechanism convergence assumption for $\calM_{super}$ is also met, as it is a direct consequence of the corresponding assumption on the original ST-HCGM, combined with the uniform convergence of the aggregated empirical measure over all subunits, which is guaranteed by the Glivenko-Cantelli class assumption.

\textbf{Step 4: Invoking Theorem 1}
Since $\calM_{super}$ is a valid T-HCGM that satisfies all the necessary premises, we can directly invoke Theorem \ref{thm:collapse_t}. This theorem guarantees that the marginalized distribution of the Super-Unit model converges to its collapsed distribution in KL-divergence:
$$ \lim_{m\to\infty} \KL(P_{super,col,T} \ || \ P_{super,m,T}^{marg}) = 0 $$
where $P_{super,col,T}$ is the distribution of the collapsed Super-Unit model and $P_{super,m,T}^{marg}$ is its marginalized distribution.

The final step is to establish the equivalence between the distributions of the Super-Unit model and the original ST-HCGM. By construction, the macro-state space of the ST-HCGM is identical to the state space of the Super-Unit model. Consequently, the marginalized distribution $P_{m,T,N}^{marg}$ of the ST-HCGM is the same as $P_{super,m,T}^{marg}$. Similarly, the Spatio-Temporal Dynamic Collapsed Model (ST-DCM) describes the exact same limiting process as the collapsed model of the Super-Unit, meaning $P_{col,T,N} = P_{super,col,T}$. Given these equivalences, the convergence result for the Super-Unit model directly implies the convergence for the Spatio-Temporal model.

\end{proof}

\section{Proofs for Identifiability}
\label{app:proof_id_theorems}
This section provides proofs for the identifiability theorems for Temporal Hierarchical Causal Models (T-HCMs) and Spatio-Temporal Hierarchical Causal Models (ST-HCMs).

\subsection{Proof of Proposition \ref{prop:id_t_hcm_adj}}  %  Identifiability in T-HCMs (Proposition 1 \& 2)
\label{app:prop_id_t_hcm_adj}

We first restate the propositions concerning identifiability in the purely temporal setting. The proofs operate on the Dynamic Collapsed Model (DCM), whose existence and convergence properties are guaranteed by the temporal collapsing theorem.

% \begin{proposition}[Identifiability in T-HCMs via Adjustability]
% \label{prop:id_t_hcm_adj_appendix}
% In a T-HCGM, the causal effect $P(Y_{t} | \doop(A_{t'} = a^*))$ is identifiable if, in its DCM, all backdoor paths from the treatment node $Q_{i,t'}^A$ to the outcome node $Q_{i,t}^Y$ are blocked by conditioning on the observable macro-history $\bm{S}_{i,<t'}$.
% \end{proposition}

\begin{proof}
The problem of identifying the causal effect is equivalent to identifying the post-intervention distribution $P(Q_{i,t}^Y | \doop(Q_{i,t'}^A = \delta_{a^*}))$ in the DCM. The primary obstacles are the backdoor paths connecting $Q_{i,t'}^A$ and $Q_{i,t}^Y$. These paths arise from two sources: (1) dynamic confounding from the system's history, and (2) static confounding from the time-invariant latent variable $U_i$.

By the premise of the proposition, all dynamic backdoor paths are intercepted by the observable macro-history $\bm{S}_{i,<t'}$. We can therefore apply the backdoor adjustment formula \citep{10.1214/09-SS057}, integrating over the distribution of this history to block these paths:
$$ P(Q_{i,t}^Y | \doop(Q_{i,t'}^A)) = \int P(Q_{i,t}^Y | \doop(Q_{i,t'}^A), \bm{s}_{<t'}) P(\bm{s}_{<t'}) d\bm{s}_{<t'} $$
After conditioning on a specific history realization $\bm{s}_{<t'}$, the only remaining unblocked backdoor path is the one mediated by the static confounder, $Q_{i,t'}^A \leftarrow U_i \rightarrow Q_{i,t}^Y$. At this stage, we leverage the hierarchical structure. The law of total probability allows us to express the conditional interventional distribution as an expectation over $U_i$:
\begin{align}
    &P(Q_{i,t}^Y | \doop(Q_{i,t'}^A), \bm{s}_{<t'}) \notag \\
    &= \E_{U_i | \bm{s}_{<t'}} \left[ P(Q_{i,t}^Y | \doop(Q_{i,t'}^A), \bm{s}_{<t'}, U_i) \right] 
\end{align}

Crucially, once we condition on $U_i$, all backdoor paths are blocked. Thus, the $\doop$ operator can be removed, and the inner term becomes an estimable quantity from the observational distribution of a specific unit $i$ (for which $U_i$ is fixed). While $U_i$ is unobserved, we can estimate this expectation by first estimating the unit-specific effect from its subunit-level panel data, and then averaging these estimates across all units. Since all components of the final expression are computable from observational data, the causal effect is identifiable.
\end{proof}

\subsection{Proof of Proposition \ref{prop:id_t_hcm_iv}}
\label{app:prop_id_t_hcm_iv}
% \begin{proposition}[Identifiability in T-HCMs via Instruments]
% In a T-HCGM, the causal effect of $A_{ij,t'}$ on $Y_{ij,t}$ is identifiable if there exists a valid subunit-level instrumental variable $Z_{ij,t'}$.
% \end{proposition}

\begin{proof}
A valid instrumental variable $Z_{ij,t'}$, represented by the node $Q_{i,t'}^Z$ in the DCM, is by definition d-separated from all confounding sources, including the static $U_i$ and any unobserved history. This exogeneity allows us to relate the desired causal effect to the observational distribution. Let $g(a) \coloneqq P(Y_t | \doop(A_{t'}=a))$ be the target causal effect distribution. The exclusion restriction of the instrument implies that the observed association between the instrument $Z_{t'}$ and the outcome $Y_t$ is mediated entirely through the treatment $A_{t'}$. This relationship can be formalized by the following integral equation:
\begin{align}
    P(Y_t | Z_{t'}=z) = \int g(a) \cdot P(A_{t'}=a | Z_{t'}=z) da
\end{align}
This is a Fredholm integral equation of the first kind. The left-hand side, $P(Y_t | Z_{t'}=z)$, and the kernel of the integral, $P(A_{t'}=a | Z_{t'}=z)$, are both identifiable from the observational joint distribution of variables in the DCM. Under standard regularity conditions that ensure the invertibility of the integral operator defined by the kernel (a completeness condition, see \citep{newey2003instrumental}), this equation can be solved for the unknown function $g(a)$. Since $g(a)$ can be uniquely determined from observational quantities, the causal effect is identifiable.
\end{proof}

\subsection{Proof of Theorem \ref{thm:id_st_hcm_adj}}  % Identifiability in ST-HCMs (Theorem 3 \& 4)
\label{app:thm_id_st_hcm_adj}

% We now prove the main identifiability theorems for the full spatio-temporal framework. The proofs follow by applying the same logic as in the temporal case to the ST-DCM.

% \begin{theorem}[Identifiability in ST-HCMs via Adjustability]
% \label{thm:id_st_hcm_adj_appendix}
% In an ST-HCGM, the causal effect $P(Y_{k,t} | \doop(A_{i,t'} = a^*))$ is identifiable if, in its ST-DCM, all backdoor paths from $Q_{i,t'}^A$ to $Q_{k,t}^Y$ are blocked by conditioning on the observable spatio-temporal history.
% \end{theorem}

\begin{proof}
The proof directly generalizes that of Proposition \ref{prop:id_t_hcm_adj}. The problem is first translated to the ST-DCM via the spatio-temporal collapsing theorem. The set of confounding backdoor paths now includes those that traverse through the histories of spatial neighbors. By the premise of the theorem, this entire set of dynamic and spatial backdoor paths is blocked by conditioning on the observable spatio-temporal history, $\text{Hist}_{ST,<t'}^{obs}$. Applying the backdoor adjustment formula yields:
\begin{align}
    P(Q_{k,t}^Y | \doop(Q_{i,t'}^A)) = \int P(Q_{k,t}^Y | \doop(Q_{i,t'}^A), h_{st}) P(h_{st}) dh_{st}
\end{align}
After this adjustment, the only remaining backdoor paths are those mediated by the set of all static confounders $\{U_j\}_{j=1}^N$. As in the temporal case, the hierarchical structure allows us to overcome this. By conditioning on the full set of static confounders $U_{all} = (U_1, \dots, U_N)$, all backdoor paths become blocked. The resulting conditional interventional distribution can be estimated from the observational data within unit-groups that share the same (unobserved) values of $U_{all}$. The final estimand is obtained by averaging over the population of units. Therefore, the causal effect is identifiable.
\end{proof}

% \begin{theorem}[Identifiability in ST-HCMs via Instruments]
% \label{thm:id_st_hcm_iv_appendix}
% In an ST-HCGM, the causal effect $P(Y_{k,t} | \doop(A_{i,t'} = a^*))$ is identifiable if there exists a valid spatio-temporal subunit-level instrumental variable $Z_{ij,t'}$.
% \end{theorem}

\subsection{Proof of Theorem \ref{thm:id_st_hcm_iv}}
\label{app:thm_id_st_hcm_iv}

\begin{proof}
The proof structure is identical to that of Proposition \ref{prop:id_t_hcm_iv}. A valid spatio-temporal instrument $Z_{ij,t'}$, represented by $Q_{i,t'}^Z$ in the ST-DCM, must be exogenous with respect to the entire confounding system. This means it must be d-separated from the set of all static confounders $\{U_k\}$ and the complete unobserved history of the entire spatio-temporal system.

Given such an instrument, we can again formulate the Fredholm integral equation relating the causal effect distribution $g_k(a) \coloneqq P(Y_{k,t} | \doop(A_{i,t'}=a))$ to observational quantities:
\begin{align}
    P(Y_{k,t} | Z_{i,t'}=z) = \int g_k(a) \cdot P(A_{i,t'}=a | Z_{i,t'}=z) da
\end{align}
The terms on the left and the kernel of the integral are identifiable from the observational distribution of the ST-DCM. Provided the completeness condition holds for the instrument, this equation has a unique solution for $g_k(a)$. Thus, the full causal effect distribution is identifiable.
\end{proof}

\section{Details for ATE Estimation}
\label{app:ATE_Estimation}

% LMM
\subsection{Linear Mixed-Effects Model Specification}
\label{app:ATE_Estimation_lmm}

This section details the implementation of our two-stage estimation algorithm when using a semi-parametric Linear Mixed-Effects Model (LMM) as the unit-specific conditional mechanism, $\mathcal{M}_i$. LMMs, also known as panel data models with fixed effects in econometrics, are a natural choice for this problem as they are explicitly designed to handle unobserved time-invariant heterogeneity.

\paragraph{Stage 1: Learning Conditional Dynamics with LMMs}
The first stage aims to estimate the conditional expectation of the outcome, $\mathbb{E}[Y_{ij,t} | \text{History}_{i,<t}, U_i]$, under the assumption of linear relationships. Unlike the non-parametric approach where an independent model is trained for each unit, the LMM leverages a more statistically efficient \textbf{pooled regression} across all units and subunits.

The core idea is to model the static, unobserved confounder $U_i$ as a \textbf{unit-specific fixed effect}. This is achieved by including a dummy variable for each unit in the regression model. This approach effectively absorbs all time-invariant differences between units, including the effect of $U_i$, into these unit-specific intercepts.

We fit a single, global LMM to the entire subunit-level dataset. The model is specified by the following linear formula:
\begin{equation}
\begin{split}
    Y_{ij,t} = &\ \alpha_i + \beta_A A_{ij,t} + \beta_{\text{temp}} \bar{Y}_{i,t-1} \\ 
               &\ + \rho \bar{Y}_{\mathcal{N}(i), t-1} + \epsilon_{ij,t}
\end{split}
\label{eq:lmm_formula}
\end{equation}
where $\alpha_i$ represents the fixed effect for unit $i$, and the coefficients $\beta_A$, $\beta_{\text{temp}}$, and $\rho$ are shared across all units. The treatment effect, $\beta_A$, is the primary parameter of interest estimated in this stage.

\paragraph{Stage 2: Conditional G-Computation via Simulation}
With the fitted LMM, we perform a conditional G-computation procedure as in Algorithm~1 to estimate the average treatment effect. This stage remains conceptually identical to the non-parametric case, but the prediction at each step uses the fitted linear model instead of a GBM.

The simulation for each unit $i$ under a fixed policy $a^*$ proceeds recursively. At each time step $t$, we predict the potential outcome $\hat{y}_{i,t}(a^*)$ using the fitted coefficients from Equation~\ref{eq:lmm_formula}:
\begin{itemize}
    \item The fixed treatment assignment, $a^*$, is used for the term $\beta_A a^*$.
    \item The unit-specific intercept, $\hat{\alpha}_i$, learned during Stage 1, is used. This is the crucial step that conditions the prediction on the unit's unique characteristics (i.e., on $U_i$).
    \item The temporal lag is taken from the \textit{simulated} history of unit $i$.
    \item The spatial lag is taken from the \textit{observed} history of unit $i$'s neighbors.
\end{itemize}
This process is repeated until we obtain the final potential outcome $\hat{y}_{i,T}(a^*)$, which represents the estimate of $\mathbb{E}_{G\text{-Comp}}[Y_T | do(A_t = a^*), U=u_i]$.

\paragraph{Final ATE Calculation}
The final ATE is computed by averaging the simulated potential outcomes across all units, exactly as in the GBM case:
\begin{equation}
    \widehat{\text{ATE}}_T = \frac{1}{N}\sum_{i=1}^{N} \hat{y}_{i,T}(1) - \frac{1}{N}\sum_{i=1}^{N} \hat{y}_{i,T}(0).
\end{equation}

% GBM
\subsection{Gradient Boosting Machines Specification}
\label{app:ATE_Estimation_GBM}

This section provides a description of the two-stage estimation algorithm presented in the main text, specifically for the non-parametric Gradient Boosting Machine (GBM) implementation of the unit-specific conditional mechanism, $\mathcal M_i$.

\paragraph{Stage 1: Learning Conditional Dynamics with GBMs}
The first stage involves learning the conditional dynamics for each unit. The goal is to estimate the conditional expectation of the outcome, $\mathbb{E}[Y_{i,t} | \text{History}_{i,<t}, U_i]$, where the conditioning on the unobserved, time-invariant confounder $U_i$ is crucial.

To achieve this, we train an \textbf{independent GBM for each unit $i$}, which we denote as $\mathcal M_i$. Training a separate model for each unit is a standard strategy to implicitly condition on unit-specific fixed effects. In this way, each model $\mathcal M_i$ learns the unique dynamic patterns of its corresponding region, effectively accounting for the influence of that region's unique time-invariant characteristics, $U_i$.

The target variable for each model $\mathcal M_i$ is the subunit-level speed, $Y_{ij,t}$. The feature set consists of the relevant temporal and spatial history, including lagged speeds, lagged treatments, and the historical states of neighboring units, as specified in our experimental setup.

\paragraph{Stage 2: Conditional G-Computation via Simulation}
With the set of trained models $\{\mathcal M_i\}_{i=1}^N$, we perform a conditional G-computation procedure as outlined in Algorithm 1 to estimate the potential outcomes under different global treatment policies. This stage involves two parallel simulations for each unit $i$, one for the policy $do(A_t = 0)$ and another for $do(A_t = 1)$.

The simulation for a single unit $i$ under a fixed policy $a^*$ proceeds recursively from $t=1$ to $T$. At each time step $t$, we predict the potential outcome $\hat{y}_{i,t}(a^*)$ using the trained model $\mathcal M_i$. The input features for this prediction are constructed from:
\begin{itemize}
    \item The fixed treatment assignment, $a^*$.
    \item The \textit{simulated} history of unit $i$ up to time $t-1$, which is built using the outcomes predicted in previous steps of the simulation.
    \item The \textit{observed} history of unit $i$'s neighbors, $\{S_{k,<t}\}_{k \in \mathcal{N}(i)}$.
\end{itemize}
This recursive process is repeated until we obtain the final potential outcome for unit $i$ at the time horizon $T$, denoted $\hat{y}_{i,T}(a^*)$. This value serves as an estimate of the inner expectation in our identification formula, $\mathbb{E}_{G\text{-Comp}}[Y_T | do(A_t = a^*), U=u_i]$.

\paragraph{Final ATE Calculation}
After completing the simulations for all $N$ units under both treatment policies ($a^*=0$ and $a^*=1$), we obtain two sets of potential outcomes: $\{\hat{y}_{i,T}(0)\}_{i=1}^N$ and $\{\hat{y}_{i,T}(1)\}_{i=1}^N$. The final ATE is computed by averaging across these unit-specific estimates, which corresponds to the outer expectation over the distribution of $U$:
\begin{equation}
    \widehat{\text{ATE}}_T = \frac{1}{N}\sum_{i=1}^{N} \hat{y}_{i,T}(1) - \frac{1}{N}\sum_{i=1}^{N} \hat{y}_{i,T}(0).
\end{equation}
This non-parametric, simulation-based approach allows us to estimate the causal effect while accommodating the complex, non-linear dynamics learned by the GBMs.

\subsection{Gaussian Process Model Specification}
\label{app:ATE_Estimation_GP}
% 写高斯过程作为M_i具体的建模

In our case study, we can also employ a unit-specific Gaussian Process (GP) model\citep{rasmussen:williams:2006}, denoted as~$\mathcal{M}_i$, to estimate the conditional dynamics for each unit~$i$. This appendix provides the detailed specification of this GP model.

The GP framework offers a principled, non-parametric approach to learn the complex, non-linear function governing the spatio-temporal dynamics, conditional on the unit's latent static properties~$U_i$. For each unit~$i$, we model the subunit-level outcome~$Y_{ij,t}$ as being generated by a latent function~$g_i(\cdot)$ corrupted by i.i.d. Gaussian noise:
\begin{equation}
    Y_{ij,t} = g_i(\mathbf{x}_{ij,t}) + \eta_{ij,t}, \quad \eta_{ij,t} \sim \mathcal{N}(0, \sigma_n^2)
\end{equation}
where~$\mathbf{x}_{ij,t}$ is the input feature vector for subunit~$j$ in unit~$i$ at time~$t$, and~$\sigma_n^2$ is the noise variance. We place a GP prior on the latent function~$g_i$:
\begin{equation}
    g_i(\cdot) \sim \mathcal{GP}(m_i(\cdot), k_i(\cdot, \cdot'))
\end{equation}
where~$m_i(\cdot)$ is the mean function and~$k_i(\cdot, \cdot')$ is the covariance (kernel) function. The use of GPs for spatio-temporal modeling is a well-established practice in \citep{gelfand2010handbook,creswik11}

\subsubsection{Input Feature Vector}
The input feature vector~$\mathbf{x}_{ij,t}$ is designed to encode all relevant information for predicting the outcome. For our specific problem, it comprises:
\begin{itemize}
    \item \textbf{Treatment:} The current treatment assignment~$A_{ij,t}$.
    \item \textbf{Time Stamp:} The current time index~$t$.
    \item \textbf{Lagged Outcome:} The subunit's own outcome from the previous time step,~$Y_{ij,t-1}$.
    \item \textbf{Spatial Lag:} A summary of neighbors' outcomes from the previous time step, defined as the average outcome in the neighborhood~$\mathcal{N}(i)$:
    \begin{equation}
        \bar{Y}_{\mathcal{N}(i), t-1} = \frac{1}{|\mathcal{N}(i)|} \sum_{k \in \mathcal{N}(i)} \bar{Y}_{k, t-1}
    \end{equation}
    where~$\bar{Y}_{k, t-1}$ is the average subunit outcome in unit~$k$ at time~$t-1$.
\end{itemize}
The full input vector is thus:
\begin{equation}
    \mathbf{x}_{ij,t} = [A_{ij,t}, t, Y_{ij,t-1}, \bar{Y}_{\mathcal{N}(i), t-1}]^T.
\end{equation}

\subsubsection{Mean and Covariance Functions}
For simplicity and flexibility, we set the prior mean function to zero,~$m_i(\mathbf{x}) = 0$, allowing the kernel to capture the full complexity of the dynamics.

The soul of our GP model is the composite kernel function~$k_i(\mathbf{x}, \mathbf{x}')$, which we construct as a sum of kernels to model different sources of variation\citep{duvenaud_2014}. For two input vectors~$\mathbf{x}$ and~$\mathbf{x}'$, the kernel is:
\begin{equation}
    k_i(\mathbf{x}, \mathbf{x}') = k_{\text{time}}(t, t') \times k_{\text{cov}}(\mathbf{z}, \mathbf{z}')
\end{equation}
where~$\mathbf{z} = [A, Y_{\text{lag}}, \bar{Y}_{\text{spatial\_lag}}]$ represents the non-temporal features. This product structure allows for a non-stationary model where the covariance structure of the covariates can evolve over time.

\paragraph{Temporal Kernel:} We use the Mat\'ern 5/2 kernel to model temporal dependencies, which is a common choice for physical processes as it assumes the underlying function is twice-differentiable.
\begin{equation}
    k_{\text{Matern52}}(t, t') = \sigma_t^2 \left(1 + \frac{\sqrt{5}d}{l_t} + \frac{5d^2}{3l_t^2}\right) \exp\left(-\frac{\sqrt{5}d}{l_t}\right)
\end{equation}
where~$d = |t - t'|$, and~$\sigma_t^2$ and~$l_t$ are the variance and lengthscale hyperparameters, respectively.

\paragraph{Covariate Kernel:} We use the standard Radial Basis Function (RBF) kernel, also known as the squared exponential kernel, for the remaining covariates. We employ Automatic Relevance Determination (ARD), which assigns a separate lengthscale parameter~$l_d$ to each dimension~$d$ of the input~$\mathbf{z}$.
\begin{equation}
    k_{\text{RBF-ARD}}(\mathbf{z}, \mathbf{z}') = \sigma_z^2 \exp\left(-\frac{1}{2} \sum_{d=1}^{D} \frac{(z_d - z'_d)^2}{l_d^2}\right)
\end{equation}
The ARD mechanism allows the model to learn the relative importance of each feature.

\subsubsection{Model Fitting}
For each unit~$i$, the hyperparameters~$\theta_i = \{\sigma_n^2, \sigma_t^2, l_t, \sigma_z^2, \{l_d\}\}$ of the model~$\mathcal{M}_i$ are learned by maximizing the log marginal likelihood of the observed data~$\mathcal{D}_i$. This is typically done using gradient-based optimizers. The resulting trained models~$\{\mathcal{M}_i\}_{i=1}^N$ are then used in Stage 2 of our estimation algorithm (Algorithm 1) to perform counterfactual simulations.

Crucially, while $\mathcal{M}_i$ is trained as a predictive model, its role in our framework is to serve as a non-parametric estimator for the unit-specific conditional dynamics. Because $\mathcal{M}_i$ is trained on data centered at unit $i$, it implicitly conditions on the latent static properties $U_i$. Therefore, under the identification guarantees of Theorem 3, using this trained model $\mathcal{M}_i$ to simulate outcomes under a do-intervention (as performed in Stage 2 of Algorithm 1) yields a valid estimate of the conditional causal effect $E[\cdot | \text{do}(\cdot), U_i]$.

\section{Experiment Details}
\label{app:exp}

This section provides supplementary details for the experiments presented in the main paper. We elaborate on the synthetic data generation process, model implementations, and provide a description of the real-world dataset.

\subsection{Details of the simulation experiment}
\label{app:exp_simulation}

\subsubsection{Data Generation Process}
We generate synthetic data from a Spatio-Temporal Hierarchical Causal Model (ST-HCM) designed to test the core challenges of unobserved confounding and spatial spillover. The process involves $N$ units, each with $m$ subunits, observed over $T$ time steps.

A time-invariant, unobserved confounder $U_i$ is drawn for each unit $i$ from a standard normal distribution, $U_i \sim \mathcal{N}(0, 1)$. This confounder influences both treatment assignment and outcomes throughout the simulation.

The results presented in the main paper's Table 1 are based on a data generating process with linear relationships, providing a clear testbed for the causal identification capabilities of different model structures. The generation process for each subunit $(i, j)$ at time $t$ is as follows:

\paragraph{1. Treatment Assignment:} The treatment $A_{ij,t} \in \{0, 1\}$ is a binary variable drawn from a Bernoulli distribution whose probability is a non-linear function of the confounder $U_i$:
\begin{equation}
    A_{ij,t} \sim \text{Bernoulli}(\sigma(\gamma \cdot U_i - 0.5))
\end{equation}
where $\sigma(\cdot)$ is the sigmoid function and $\gamma$ is the \textbf{confounding strength}. This ensures that treatment assignment is endogenously determined by the unobserved confounder.

\paragraph{2. Outcome Generation:} The outcome $Y_{ij,t}$ is generated as a linear combination of the confounder's main effect, the treatment effect, temporal and spatial lags, and i.i.d. noise:
\begin{equation}
\begin{split}
    Y_{ij,t} = &\ \gamma \cdot U_i + \beta_A \cdot A_{ij,t} + \beta_{\text{temp}} \cdot \bar{Y}_{i,t-1} \\ 
               &\ + \rho \cdot \bar{Y}_{\mathcal{N}(i), t-1} + \epsilon_{ij,t}
\end{split}
\end{equation}
where:
\begin{itemize}
    \item $\beta_A$ is the true Average Treatment Effect (ATE).
    \item $\bar{Y}_{i,t-1}$ is the average outcome of unit $i$ at time $t-1$.
    \item $\bar{Y}_{\mathcal{N}(i), t-1}$ is the average outcome of unit $i$'s neighbors $\mathcal{N}(i)$ at time $t-1$.
    \item $\rho$ is the spatial spillover strength.
    \item $\epsilon_{ij,t} \sim \mathcal{N}(0, \sigma^2_{\epsilon})$ is the subunit-level noise.
\end{itemize}

\paragraph{Simulation Parameters:} For the experiments in Table 1, we used the following parameters: $N=16$, $m=50$, $T=8$, $\beta_A=5.0$, $\beta_{\text{temp}}=0.5$, $\sigma^2_{\epsilon}=4$. The values for $\gamma$ and $\rho$ are varied as indicated in the table.

% \begin{figure}[h]
% \centering
% \includegraphics[width=0.44\textwidth]{}
% \caption{}
% \label{fig:confounding}
% \end{figure}

\subsection{Robustness Analysis Data Generation}
\label{app:exp_simulation_Robustness}

This section details the data generation processes used for the robustness analysis. Our aim is to systematically introduce controlled violations of key assumptions underlying hierarchical linear models (LMMs), particularly the time-invariance of the unobserved confounder and the acyclic nature of contemporaneous spatial effects. All experiments are conducted with $N=16$ units, $m=50$ subunits, and $T=8$ time steps. The base confounding strength is $\gamma=2.0$ and spatial spillover strength is $\rho=1.5$.

\subsubsection{Violating Time-Invariance of the Confounder}
\label{app:time_invar_violation}
The standard ST-HCM (LMM) with fixed effects assumes that the unobserved unit-level confounder $U_i$ is time-invariant. To violate this assumption, we allow $U_i$ to evolve over time, transforming it into a time-varying confounder $U_{it}$. Specifically, $U_{it}$ follows a simple random walk:
\begin{align*}
U_{it} = U_{i,t-1} + \delta \cdot \epsilon_{it}^U
\end{align*}
where $U_{i,0}$ is drawn from a standard normal distribution, $\epsilon_{it}^U$ is i.i.d.~standard normal noise, and $\delta$ is the confounder drift strength parameter. A higher $\delta$ implies a more rapidly changing confounder, making it harder for fixed effects to control. The treatment assignment and outcome generation mechanisms then depend on this time-varying $U_{it}$ instead of $U_i$. When $\delta=0$, the confounder is static, corresponding to the assumption holding.

\subsubsection{Violating Spatial Ordering}
\label{app:spatial_order_violation}
Our framework assumes a causal ordering among units at each time step, implying that contemporaneous spatial influences are unidirectional (e.g., from unit $j$ to unit $i$ only if $j < i$). To violate this strict ordering and introduce simultaneous, bi-directional spatial effects, we modify the noise term of the outcome generation. Instead of being independent, the subunit-level noise $\epsilon_{ijt}$ is influenced by the average noise of its neighbors at the same time step. Specifically, the observed noise for subunit $j$ in unit $i$ at time $t$ becomes:
\begin{align*}
\tilde{\epsilon}_{ijt} = \epsilon_{ijt} + \kappa \cdot \left( \frac{1}{|\mathcal{N}(i)|} \sum_{k \in \mathcal{N}(i)} \epsilon_{kjt} \right)
\end{align*}
where $\epsilon_{ijt}$ are i.i.d.~standard normal noise, $\mathcal{N}(i)$ denotes the set of all spatial neighbors of unit $i$, and $\kappa$ is the cyclicity strength parameter. A non-zero $\kappa$ introduces contemporaneous spatial correlation in the unobserved error terms. This violation cannot be fully controlled by standard linear models, as it introduces endogenous error terms that are correlated across space within the same time step. When $\kappa=0$, the spatial ordering assumption holds, and the noise is independent across units.

\subsection{Chicago Traffic Dataset}
\label{app:exp_Chicago}

Our real-world analysis uses three publicly available datasets from the City of Chicago data portal\footnote{\url{https://data.cityofchicago.org/}}. We focus on a one-week period in January 2025 (from January 5th to January 11, 2025) for our case study.

\paragraph{Data Sources}
The core components of our dataset are:
\begin{itemize}
    \item \textbf{Regions:} The 29 official traffic regions (our \textit{units}) provide the spatial partitioning of the city. Each region is composed of two to three community areas with comparable traffic patterns.
    \item \textbf{Segments:} Arterial traffic segments (our \textit{subunits}) provide real-time speed estimates every 10 minutes. These estimates are derived from the GPS traces of Chicago Transit Authority (CTA) buses. As noted by the data provider, individual segment speeds can be highly volatile due to factors like traffic signals and intersections.
    \item \textbf{Crashes:} Records of traffic accidents provide the precise time and location of crash events, which form the basis of our treatment variable.
\end{itemize}

\paragraph{Preprocessing and Feature Engineering}
We first performed a spatial join to assign each of the 1,025 unique road segments to its corresponding traffic region(s). The crash events were then assigned to a region and subsequently assigned to the nearest road segment within the boundaries of that region.

The key step in our analysis was defining the treatment variable at the unit level. A region was considered treated at a given 10-minute time step if one or more crashes occurred within it. This \textit{unit-level treatment} allows its effect to be estimated across all subunits in the region. The final feature set for our full ST-HCM model included:
\begin{itemize}
    \item \textbf{Temporal lags:} The subunit's own speed at $t-1$ (speed\_lag1) and the unit's treatment status at $t-1$ (unit\_treatment\_lag1).
    \item \textbf{Spatial lags:} The average speed and treatment status of neighboring regions at $t-1$.
    \item \textbf{Time features:} The hour of the day and the day of the week, treated as categorical variables.
\end{itemize}

\end{document}